\renewcommand{\Vec}{\mathrm{Vec}}
\def\eqref#1{equation~\ref{#1}}
\def\Eqref#1{Equation~\ref{#1}}
\def\1{\bm{1}}
\DeclareMathAlphabet{\mathsfit}{\encodingdefault}{\sfdefault}{m}{sl}
\SetMathAlphabet{\mathsfit}{bold}{\encodingdefault}{\sfdefault}{bx}{n}
\newcommand{\E}{\mathbb{E}}
\newcommand{\R}{\mathbb{R}}
\DeclareMathOperator{\tr}{tr}
\pgfplotsset{compat=1.18}
\definecolor{shellbackground}{rgb}{0.98, 0.98, 0.98}
\newcommand{\ICML@appearing}{\vspace{-2em}}
\theoremstyle{plain}
\newtheorem{theorem}{Theorem}[section]
\newtheorem{lemma}[theorem]{Lemma}
\newtheorem{corollary}[theorem]{Corollary}
\theoremstyle{definition}
\newtheorem{definition}[theorem]{Definition}
\theoremstyle{remark}
\newcommand{\AG}[1]{\todo[color=blue!30]{\textbf{AG:} #1}} %
\newcommand{\RG}[1]{\todo[color=purple!30]{\textbf{RG:} #1}} %
\newcommand{\norm}[1]{\left\lVert #1 \right\rVert}
\newcommand{\Hoff}{\|H\|_{\text{off}}}
\newcommand{\UB}{\mathrm{UB}}
\newcommand{\OptRotH}{OptRot$^{+}$}
\icmltitlerunning{OptRot}
\begin{document}

\twocolumn[
\icmltitle{OptRot: Mitigating Weight Outliers via Data-Free \\ Rotations for Post-Training Quantization}

\icmlsetsymbol{equal}{*}
\icmlsetsymbol{msr}{\dag}
\begin{icmlauthorlist}
\icmlauthor{Advait Gadhikar}{equal,msr,yyy}
\icmlauthor{Riccardo Grazzi}{equal,sch}
\icmlauthor{James Hensman}{sch}
\end{icmlauthorlist}

\icmlaffiliation{yyy}{CISPA Helmholtz Center for Information Security, Germany}
\icmlaffiliation{sch}{Microsoft Research, Cambridge, UK}

\icmlcorrespondingauthor{Advait Gadhikar}{advait.gadhikar@cispa.de}

\icmlkeywords{Machine Learning, ICML}

\vskip 0.3in
]

\printAffiliationsAndNotice{\textsuperscript{$*$} Equal contribution.\textsuperscript{$\dag$}This research was performed while the author was at Microsoft Research Cambridge.}  %

\begin{abstract}
The presence of outliers in Large Language Models (LLMs) weights and activations makes them difficult to quantize. Recent work has leveraged rotations to mitigate these outliers. 
In this work, we propose methods that learn fusible rotations by minimizing  principled and cheap proxy objectives to the weight quantization error. We primarily focus on GPTQ as the quantization method.
Our main method is OptRot, which reduces weight outliers simply by minimizing the element-wise fourth power of the rotated weights. 
We show that OptRot outperforms both Hadamard rotations and more expensive, data-dependent methods like SpinQuant and OSTQuant for weight quantization. It also improves activation quantization in the W4A8 setting. 
We also propose a data-dependent method, \OptRotH, that further improves performance by incorporating information on the activation covariance.
In the W4A4 setting, we see that both OptRot and \OptRotH  perform worse, highlighting a trade-off between weight and activation quantization.
\end{abstract}

\section{Introduction}
Increasing model size has enabled LLMs to perform a range of tasks \citep{achiam2023gpt, grattafiori2024llama3, yang2025qwen3}, encouraging practitioners to design huge models with billions of parameters.
Post Training Quantization (PTQ) is a widely adopted strategy for compressing these models by lowering their precision, while limiting the drop in performance to enable efficient inference \citep{xu2024scaling}.
Scalar quantization is the most popular PTQ approach and relies on mapping each parameter value to a point on a finite precision grid, determined by the bit-width.
Outliers prevent the finite grid from uniformly covering all values and can lead to large quantization errors.
Hence, outlier reduction has increasingly received attention as a crucial pre-processing step \citep{chee2023quip} for PTQ with algorithms like simple Round-to-Nearest (RTN) or the more powerful GPTQ \citep{frantar2022gptq}.

Recent work has applied rotations to weight matrices to mitigate outliers in weights and activations while keeping the network functionally equivalent. 
Hadamard rotations have shown to significantly improve the performance of GPTQ \citep{ashkboos2024quarot, tseng2024quip}.
To minimize overhead, these rotations can be materialized online efficiently with the Walsh-Hadamard transform, and in some cases they can also be fused with model weights.
Fused rotations can also be learned without increasing inference cost.
Learning rotations significantly improves performance.
Heuristics like making the weight or activation distribution more uniform \citep{akhondzadeh2025kurtail, shao2025dartquant} have shown to learn better rotations.
Further improvements have been achieved by minimizing the loss of the model when rotated and then quantized in the forward pass \citep{liu2024spinquant}. However, this quantization-aware training phase uses exclusively RTN as the quantization method, since GPTQ would be too slow.

In this paper, we design efficient objectives to learn rotations that provably improve the quantization error, without quantization-aware training. %
To do so, we closely examine the GPTQ quantization objective, which minimizes a layerwise approximation of the KL divergence between the quantized and original model.
A smaller KL divergence, implies better quantization -- the quantized model is closer to the original. 
Leveraging the theoretical framework proposed by \citet{chee2023quip}, we show that the GPTQ objective has an upper bound which depends on the weight incoherence (which captures the degree of weight outliers).

We propose OptRot, a data-free method that learns rotations by minimizing a smooth proxy for the weight incoherence of the model: the element-wise fourth power of the rotated weights.
We show that OptRot achieves better incoherence and weight quantization performance than QuaRot, SpinQuant, OSTQuant and QuIP$\#$.
Different to prior work, OptRot is data-free and (except Kurtail) quantization-agnostic: it does not require a calibration set or the quantization scheme to learn rotations.
OptRot can also learn rotations by optimizing a subset of the weights, without loading the entire model on GPU.

We also derive an upper bound to the GPTQ error that, in addition to the weight incoherence, also depends on a quantity related to the amount of feature correlation captured by the covariance matrix of the activations (or Hessian), which can be changed via rotations. 
Higher feature correlations makes quantizing with GPTQ easier. 
To this end, we introduce \OptRotH, a data-dependent extension of OptRot, which learns rotations that jointly optimize the weight incoherence and Hessian feature correlation. %
\OptRotH trades off data independence for smaller KL divergence of the quantized model.

Our contributions can be summarized as follows:

\begin{itemize}
    \item We derive cheap proxy objectives to learn rotations that improve weight quantization error. We do so by extending the theoretical framework of \citet{chee2023quip} and deriving error bounds for quantization with GPTQ and round-to-nearest (RTN).
    \item We propose OptRot, a data-free method which learns rotations that reduces the weight incoherence by minimizing the elementwise fourth power of weights. We also propose a data-dependent version, \OptRotH, that further improves performance.
    \item OptRot outperforms methods like SpinQuant, QuaRot and OSTQuant for weight-only quantization and is competetive with SpinQuant for activation quantization with A8W4.
     We also observe a trade-off for activation quantization with A4W4, where reducing weight incoherence worsens activation quantization.
\end{itemize}

\begin{figure*}
    \centering
    \includegraphics[width=0.95\linewidth]{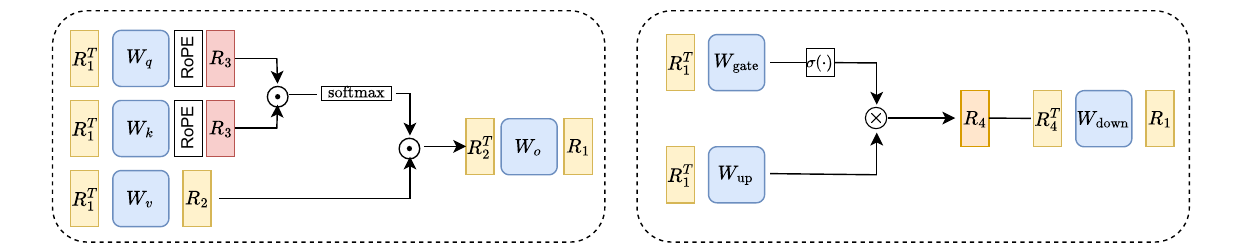}
    \caption{
    Rotations applied by QuaRot, SpinQuant and OptRot to improve PTQ.
    $R_1, R_1^\top, R_2, R_2^\top$ and $R_4^\top$ (yellow) are fusible rotations. 
     $R_3$ (red) and $R_4$ (orange) are online rotations. $R_1$ is shared across layers. In OptRot and SpinQuant  $R_1$ and $R_2$ are learned without increasing inference cost.}
    \label{fig:rotations}
\end{figure*}

\section{Related Work}

GPTQ \citep{frantar2022gptq} is the most commonly used scalar quantization method for weight quantization, strongly outperforming the round-to-nearest (RTN) baseline.
Outliers, present in weights and activations can significantly impact the quantization performance of GPTQ (and RTN).
Various approaches have tried to tackle outliers in this regard.
\citet{dettmers2022gpt3} suggest a mixed precision method where outliers are maintained in high precision high precision.
SmoothQuant \citep{xiao2023smoothquant} transfers activation outliers to weights.
MagR \citep{zhang2024magr} introduces a regularizer to penalize outlier weights and learns an invertible map.

More recently, methods have leveraged the rotation invariance in LLMs to rotate weights and activations to mitigate outliers without changing model function.
QuaRot \citep{ashkboos2024quarot} and QuIP\# \citep{tseng2024quip} use Hadamard rotations to this effect.
\citet{akhondzadeh2025kurtail} learns rotations that make activations more uniformly distributed.
Similarly, DartQuant \citep{shao2025dartquant} learns rotations that encourage activation uniformity and maximize space utilization respectively. 
SpinQuant~\citep{liu2024spinquant} and Kurtail~\citep{akhondzadeh2025kurtail} learn fused rotations to improve mainly activation quantization, while FlatQuant \cite{sun2024flatquant} learns invertible linear maps which cannot be fused, instead of rotations, to reduce outliers. 
 ButterflyQuant~\citep{xu2025butterflyquant} learns orthogonal transforms parameterized by Givens rotations. 
 OSTQuant \citep{ostquant} learns rotations and invertible scalings that minimize the KL divergence between the quantized and orignal model.
 SpinQuant, OSTQuant and FlatQuant are tailored for quantization via RTN, as activations (or weights) are quantized with RTN during rotation optimization.
 In contrast, this work focuses on learning transformations that improve quantization with GPTQ, which is more challenging and impractical with end-to-end methods. 

\section{Quantization Error and Outliers}
\label{sec:theory}

We first outline the theoretical framework and establish a connection between the quantization error and the weight and activation outliers, in the case of weight quantization.
Our goal is to learn rotations that minimize the KL divergence between the quantized and original model: 
\begin{align}
    \min_{\mathcal{R}} \big\{\ell(\theta, \hat \theta) :=\mathbb{E}_z   \mathrm{KL} (p_\theta(\cdot \mid  z) \,\|\ p_{\hat \theta}( \cdot \mid z))\big\}\label{eq:kl}\,,
\end{align}
where $    \mathrm{KL}(p \,\|\ q) = \mathbb{E}_{y \sim p} \left[\log \left(p(y)/q(y)\right)\right]
$ and $p_\theta(\cdot \mid  z)$ denotes the output probability distribution of a language model given an input sequence $z$. The original weights are denoted by $\theta$ while the rotated and then quantized weights with $\hat \theta$. The set of rotations (orthogonal matrices) is denoted by $\mathcal{R}$.
For the llama models, rotations are applied to weights as shown in ~\cref{fig:rotations}. 

Minimizing (\ref{eq:kl}) directly with a gradient based method can be challenging: it requires to compute several forward and backward passes through the model and to differentiate through the quantization operation, which is piecewise constant and can be slow in the case of GPTQ. Our goal is to design principled, cheaper proxy objectives that still allow us to minimize (\ref{eq:kl}).

We begin by approximating the KL divergence between the quantized and original model as the sum of the layerwise reconstruction errors (details are provided in Appendix~\ref{app:approx-kl}):
\begin{align} \label{eq:kl-approx}
    \ell(\theta, \hat \theta) &\approx 
     \sum_{W,H} \mathcal{L}(\hat{W} - W, H) 
\end{align}
\begin{equation}
\begin{aligned}
\mathcal{L}(\hat{W} - W, H) 
    &= \tr \big( (\hat{W} - W) H (\hat{W} - W)^\top  \big) \\
    &=  \E_x \Big [\norm{(\hat{W} - W)x }^2 \Big]\, .
\end{aligned}
 \label{eq:trace-obj}   
\end{equation} 
Here, $W \in \R ^{m \times n}$ is a weight matrix of the original LLM  and $\hat{W}$ denotes the rotated and then quantized weight. 
 $x \in \R^n$ is the input activation from the previous layer, and $H = \E_x[xx^\top] \in \R^{n \times n}$ is the uncentered input covariance or Hessian. 
The term $\mathcal{L}(W-\hat W, H)$ measures the quality of a quantized weight $\hat W$ in isolation --  how well we can recover the outputs of the original layer.

Following \citet{chee2023quip}, we define the incoherence of $W$ and $H$ with SVD $H = Q \Lambda Q^\top $ as 
\begin{align*}
    \mu_W :&= \sqrt{mn} \,w_{\mathrm{max}} / \lVert W\rVert_F\,, \quad &w_{\mathrm{max}} &:= \max_{ij} \lvert W_{ij} \rvert\,,\\
    \mu_H :&= \sqrt{n}\, q_{\mathrm{max}}\,,  &q_{\mathrm{max}} &:=\max_{i,j} \lvert Q_{ij}\rvert \,.\nonumber\label{eq:incoherence-defn}
\end{align*}
The weight incoherence, $\mu_W$, measures the tailed-ness of the distribution (which increases with more outliers). 
It has minimum  $\mu_W = 1$, which is achieved when all weights are constant ($W_{i,j} = c \in \R$).
The Hessian incoherence $\mu_H$ has a similar role and also has minimum $1$.  
As we will show later, low incoherence improves quantization error.

 QuIP\# \cite{tseng2024quip} exploits incoherence processing, which decreases $\mu_W$ and $\mu_H$ by transforming $\tilde{W} \leftarrow UWV^\top $ and $\tilde{H} \leftarrow VHV^\top $ before quantization, where $U$, $V$ are random Hadamard matrices. This approach introduces overhead during inference since the activations must be rotated prior to matrix multiply. QuaRot also uses Hadamard rotations, but exploits the rotational invariance of LLMs to reduce the overhead by fusing some of the rotations with the weights. These fusible rotations can be learned without additional overhead during inference, as done e.g.\@ in SpinQuant. We provide a schematic of the rotations used in \Cref{fig:rotations}: we learn two fusible rotations ($R_1$ and $R_2$).

\subsection{Quantization Error Bounds} 

We now describe two popular quantization methods, round-to-nearest (RTN) and GPTQ, for scalar quantization and derive upper bounds for the quantization error in both cases.

\textbf{RTN.} A simple strategy to quantize a weight matrix is $b$-bits RTN. First, each element of the weight matrix is scaled by applying the function  $g(x; s) =  \frac{2^b-1}{2}\left(\frac{x}{s} + 1\right)$ (assuming symmetric quantization), where $s > 0$ is a scale parameter. Second, it is mapped to the nearest value of the quantization grid via the function $h(x) = \arg\min_{c \in \{0, \dots,2^b-1\}} |x-c| $, where we considered integer quantization. The resulting values are then stored and during inference $g^{-1}$ is applied to retrieve the approximate weights $\hat W$.  Combining all steps we obtain 
\begin{equation}\label{eq:RTN}
  \hat W_{\mathrm{RTN}} := Q(W ; s) := g^{-1} (h(g(W; s)); s)  \,,
\end{equation}
where we extended $g,g^{-1},h$ to element-wise functions applied to the entire matrix. The scale $s$ can be set to $w_{\max}$ so that all weights are inside the quantization grid, but the max is usually computed and applied over a small contiguous group of weights to mitigate outliers with a small increase in memory (as more scales have to be stored). 
The following theorem bounds the layerwise reconstruction error of RTN. Proof is in Appendix~\ref{app:RTN}.
\begin{theorem}[Worst Case Error Bound for RTN]\label{thm:rtnbound}
Let $W \in \R^{m \times n}$ be a weight matrix and $H \in \R^{n \times n}$ be a symmetric positive semi-definite (PSD) matrix. If $s = w_{\mathrm{max}}$, then 
\begin{equation}\label{eq:rtn-inc-bound}
\mathcal{L}(\hat W_{\mathrm{RTN}}-W, H) \leq \frac{\mu_W^2}{(2^b-1)^2} \lambda_{\max}(H)  \norm{W}^2_F     \nonumber
\end{equation}
where $\mu_W$ is the weight incoherence and $\lambda_{\max}(H)$ is the maximum eigenvalue of the matrix $H$.
\end{theorem}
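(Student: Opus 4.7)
The plan is to decouple the inequality into two independent bounds — one on $\mathcal{L}$ in terms of $\lambda_{\max}(H)$ and the Frobenius error, and one on the Frobenius error of RTN in terms of $\mu_W$ — and then combine them.

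\textbf{Step 1: PSD bound to reduce $\mathcal{L}$ to Frobenius norm.} Write $E := \hat W_{\mathrm{RTN}} - W$ and expand the trace row by row:
\begin{equation*}
\mathcal{L}(E, H) = \tr(E H E^\top) = \sum_{i=1}^m e_i^\top H e_i\,,
\end{equation*}
where $e_i^\top$ is the $i$-th row of $E$. Since $H$ is symmetric PSD, $e_i^\top H e_i \leq \lambda_{\max}(H)\,\|e_i\|^2$, and summing over $i$ yields $\mathcal{L}(E, H) \leq \lambda_{\max}(H)\,\|E\|_F^2$.

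\textbf{Step 2: Element-wise RTN error.} Unpacking the definition in \eqref{eq:RTN}, the rounding operator $h$ incurs an error of at most $1/2$ per entry in integer space. Since $g^{-1}$ has derivative $\tfrac{2s}{2^b-1}$, the per-entry error in weight space satisfies $|E_{ij}| \leq \tfrac{s}{2^b-1}$ for every $(i,j)$. Summing the squares over all $mn$ entries gives
\begin{equation*}
\|E\|_F^2 \leq \frac{mn\, s^2}{(2^b-1)^2}\,.
\end{equation*}

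\textbf{Step 3: Substitute $s = w_{\max}$ and use the definition of $\mu_W$.} By definition, $\mu_W^2 = mn\, w_{\max}^2 / \|W\|_F^2$, so $mn\, w_{\max}^2 = \mu_W^2\, \|W\|_F^2$. Substituting into the Frobenius bound and combining with Step 1:
\begin{equation*}
\mathcal{L}(E, H) \leq \lambda_{\max}(H)\cdot \frac{\mu_W^2\,\|W\|_F^2}{(2^b-1)^2}\,,
\end{equation*}
which is the claimed inequality.

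The main obstacle is really just bookkeeping in Step 2 — making sure the quantization step size in weight space is correctly $\tfrac{2s}{2^b-1}$ (so that the half-step error is $\tfrac{s}{2^b-1}$) rather than $\tfrac{s}{2^b}$ or similar, which would change the constant. Step 1 is standard, and Step 3 is immediate once the definition of $\mu_W$ is unpacked; no assumption on $H$ beyond PSD is needed.
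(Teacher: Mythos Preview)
Your proof is correct and follows essentially the same three-step structure as the paper: bound $\tr(E H E^\top) \le \lambda_{\max}(H)\|E\|_F^2$, bound the per-entry RTN error by $w_{\max}/(2^b-1)$ to get $\|E\|_F^2 \le mn\,w_{\max}^2/(2^b-1)^2$, then substitute the definition of $\mu_W$. The only cosmetic difference is in Step~1: you use a direct row-wise Rayleigh-quotient argument, whereas the paper first cycles the trace to $\tr(H E^\top E)$ and invokes Von Neumann's trace inequality---both yield the same intermediate bound and neither is materially simpler than the other.
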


\textbf{GPTQ.} The GPTQ algorithm finds the quantized weight which approximately minimizes $\mathcal{L}$ by exploiting the LDL decomposition of the matrix $H$. In particular, as shown by \citet{chee2023quip}, it computes the fixed point 
\begin{align}\label{eq:gptq-method}
\hat W_{\mathrm{GPTQ}} =  Q(W  + (W - \hat W_{\mathrm{GPTQ}}) U)
\end{align}
where $U$ is the strictly upper triangular matrix from the LDL decomposition of the hessian $H = (U + I)D(U + I)^\top$.
Obtaining a proper worst-case bound for GPTQ is more challenging than for RTN since the corrections  applied before discretization via $Q$ can push some values far outside the quantization grid and can yield a large error after clamping. Indeed, as shown in \citet[Section 5.2]{chee2023quip}, if the weight matrix is carefully constructed, RTN can greatly outperform GPTQ although in practice that's rarely the case.

To address this issue and theoretically analyze the GPTQ error, \citet{chee2023quip} used a modified version of GPTQ, which we name GPTQS, that uses a constrained LDL of the Hessian and stochastic rounding to ensure that all corrected weights lie within the quantization grid with high probability.
\begin{definition}[Constrained LDL]\label{def:constrained-ldl}
    Given a Hessian matrix $H \in \mathbb{R}^{n \times n}$, we define the constrained LDL, $L$, as the solution to the optimization problem,
\begin{align*}
    \operatorname{minimize}&: \tr(HL^\top L) \\
    \operatorname{over}&: L \text{ unit upper triangular} \\
    \operatorname{subject~to}&: e_i^\top L^\top Le_i \le 1 + c,\, \forall i \in \{1,...,n\}. 
\end{align*}
\end{definition}
Note that if $c$ is large enough, then the constrained LDL is equal to the true LDL, i.e.\@ $L^{-1} = U+I$.  
The modified GPTQ is then
\begin{align}\label{eq:gptqs}
    \hat W_{\mathrm{GPTQS}} = \hat Q(W  + (W - \hat W_{\mathrm{GPTQS}}) (L^{-1}-I); s)
\end{align}
where $s = w_{\text{max}}$, $\hat Q(W ; s) = \hat g^{-1} (\hat h(\hat g(W; s)); s)$,
with $\hat g(x; s) =  \frac{2^b-3}{2}\left(\frac{x}{s} + 1\right) + 1$ and
\begin{equation*}
 \hat h(x) = \begin{cases} 
\lceil \tilde{x} \rceil & \text{w.p. } \tilde{x} - \lfloor \tilde{x} \rfloor \\ 
\lfloor \tilde{x} \rfloor & \text{w.p. } \lceil \tilde{x} \rceil - \tilde{x}. 
\end{cases}, 
\quad \tilde{x} = \min(\max(x, a), b)   
\end{equation*}
denotes stochastic rounding with $a=0,b=2^b-1$ being the limits of the quantization range.
The following theorem bounds the GPTQS error in terms of the weight and Hessian incoherence. The value of $c$ is chosen such that the correction term can be properly controlled, i.e. with probability at least $1-\delta$  it does not exceed $1$ for any weights as per \citep[Lemma 13]{chee2023quip}.
\begin{theorem}[Theorem 14 in \citet{chee2023quip}] \label{thm:gptq-bound-quip}
Let $H$ be the Hessian, and $L$ be the constrained LDL as defined in Definition \ref{def:constrained-ldl} with $c = 2\left(\log (\frac{4mn}{\delta})\right)^{-1}$.
Then the quantization error of $\hat W_{\mathrm{GPTQS}}$ is bounded with probability at least $1-\delta$ as:
    \begin{align}
     &\mathcal{L}(\hat W_{\mathrm{GPTQS}}-W, H) \nonumber\\
     &%
     \leq \frac{\mu_H^2 \mu_W^2}{n^2(2^b-3)^2} \tr \left(H^{1/2}\right)^2  \norm{W}^2_F  \log \Big(\frac{4mn}{\delta}\Big)^2 \,. 
     \label{eq:gpt-inc-bound}
\end{align}
\end{theorem}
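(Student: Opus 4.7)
The plan is to follow the proof outline of Theorem~14 of \citet{chee2023quip}, combining three ingredients: (i) a closed-form expression for the GPTQS quantization error that factors through the constrained LDL, (ii) a high-probability argument ensuring the stochastic rounding step never needs to clip, and (iii) an incoherence-based trace inequality for $\tr(D)$.

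\textbf{Step 1 (error identity).} Starting from the fixed-point definition~(\ref{eq:gptqs}), set $\tilde W := W + (W - \hat W_{\mathrm{GPTQS}})(L^{-1} - I)$, so that $\hat W_{\mathrm{GPTQS}} = \hat Q(\tilde W; s)$. Provided clipping in $\hat h$ never activates, the stochastic rounding residual $R := \hat W_{\mathrm{GPTQS}} - \tilde W$ has entries bounded by the grid step $w_{\mathrm{max}}/(2^b-3)$, since the offsets built into $\hat g$ map $[-s, s]$ into $[1, 2^b-2]$ with unit buffer on either side. A direct rearrangement gives $(\hat W_{\mathrm{GPTQS}} - W) L^{-1} = R$ and hence $\hat W_{\mathrm{GPTQS}} - W = R L$. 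Because $H = L^{-\top} D L^{-1}$, we have $L H L^\top = D$, so
\begin{equation*}
\mathcal{L}(\hat W_{\mathrm{GPTQS}} - W, H) = \tr(R L H L^\top R^\top) = \tr(R D R^\top) = \sum_{i, j} R_{ij}^2 D_{jj}.
\end{equation*}

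\textbf{Step 2 (no-clipping event).} The delicate step is showing that no clipping occurs with probability at least $1-\delta$. Processed column by column, $\tilde W_{\cdot,j}$ accumulates previous stochastic residuals weighted by the entries of $L^{-1} - I$; these residuals form a martingale with bounded increments, and the constraint $e_i^\top L^\top L e_i \leq 1 + c$ from \Cref{def:constrained-ldl} bounds the conditional variance. Applying Azuma--Hoeffding entrywise, combined with a union bound over all $mn$ entries and the choice $c = 2/\log(4mn/\delta)$, guarantees that $|\tilde W_{ij}|$ stays inside the safe buffer region of $\hat g$. This is exactly Lemma~13 of \citet{chee2023quip}, which I would invoke directly rather than reprove.

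\textbf{Step 3 (combination).} On the no-clipping event, $R_{ij}^2 \leq w_{\mathrm{max}}^2/(2^b - 3)^2$, so summing over $i$ gives $\mathcal{L}(\hat W_{\mathrm{GPTQS}} - W, H) \leq (m\, w_{\mathrm{max}}^2 / (2^b-3)^2)\, \tr(D)$. Substituting the identity $m\, w_{\mathrm{max}}^2 = \mu_W^2 \norm{W}_F^2 / n$ extracts the weight-incoherence factor. Finally, I invoke the constrained-LDL trace inequality
\begin{equation*}
\tr(D) \leq \frac{\mu_H^2}{n}\, \tr(H^{1/2})^2\, \log\!\Big(\frac{4mn}{\delta}\Big)^2,
\end{equation*}
which relates the LDL diagonal to the spectrum of $H$ by Hadamard-type arguments using both the constraint $\|L e_i\|^2 \leq 1+c$ and the Hessian incoherence $\mu_H$. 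Chaining the three bounds yields exactly~(\ref{eq:gpt-inc-bound}). The principal technical challenge is Step~2: one must balance the constraint parameter $c$, the martingale variance, and the union bound so that the $\log(4mn/\delta)^2$ factor lands with the correct exponent. \Cref{def:constrained-ldl} is engineered precisely so that Azuma--Hoeffding with $c = 2/\log(4mn/\delta)$ produces failure probability $\delta$; one logarithmic factor then comes from this sub-Gaussian tail and the other from the trace inequality used to bound $\tr(D)$.
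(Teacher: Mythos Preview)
Your Step~1 contains a genuine error. The identity $L H L^\top = D$ (with $D$ diagonal) holds only when $L$ is the inverse of the \emph{true} LDL factor of $H$. For the constrained LDL of \Cref{def:constrained-ldl}, $L$ is the minimizer of $\tr(HL^\top L)$ subject to $\|L e_i\|^2 \le 1+c$, and there is no reason for $LHL^\top$ to be diagonal. Consequently the clean decomposition $\mathcal{L} = \sum_{i,j} R_{ij}^2 D_{jj}$ is unavailable, and your chain in Step~3 breaks.

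This error propagates to your bookkeeping of the two $\log(4mn/\delta)$ factors. Your deterministic pointwise bound $|R_{ij}| \le w_{\max}/(2^b-3)$ on the no-clipping event carries no logarithm, and Lemma~11 of \citet{chee2023quip} gives $\tr(HL^\top L) \le \mu_H^2 \tr(H^{1/2})^2 / (n\min(1,c))$, which contributes only \emph{one} factor of $\log(4mn/\delta)$ via $1/c$. Your Step~3 trace inequality with $\log^2$ is not what that lemma provides. In the paper's proof, the missing logarithm comes from a \emph{second} application of Azuma--Hoeffding, not to the entries $R_{ij}$ but to the projected error $(\hat w - w)u$ along arbitrary directions $u$ (Lemma~12 of \citet{chee2023quip}): one shows $|(\hat w - w)u| \le \|Lu\|\sqrt{\tfrac12\log(2/\delta)}$ with high probability, takes $u$ over the eigenvectors of $H$, and union-bounds to obtain $\mathcal{L} \le m\,\tr(HL^\top L)\cdot \tfrac12\log(2mn/\delta)$. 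Only then is Lemma~11 applied. So the sub-Gaussian tail you mention is used for the error bound itself, not merely for the no-clipping event; your proposal invokes concentration in Step~2 alone and therefore cannot recover the correct exponent.
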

In this work, we build on the following modified (tighter) version of this theorem which depends directly on the constrained LDL.
\begin{theorem}[]\label{thm:gptq-bound-trace}
Let $H$ be the Hessian, and $L$ be the constrained LDL as defined in Definition \ref{def:constrained-ldl} with $c = 2\left(\log (\frac{4mn}{\delta})\right)^{-1}$.
Then the quantization error or $\hat W_{\mathrm{GPTQS}}$ is bounded with probability $1-\delta$ as
\begin{align}
&\mathcal{L}(\hat W_{\mathrm{GPTQS}}-W, H) \nonumber\\
&\le \frac{\mu_W^2}{n(2^b-3)^2}\tr(HL^\top L)\norm{W}_F^2 \cdot \frac{1}
    {2}\log\left(\frac{2mn}{\delta}\right)
\end{align}
\end{theorem}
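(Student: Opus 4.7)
The plan is to follow the proof of Theorem 14 in \citet{chee2023quip} step by step, but defer the single relaxation in which they upper bound $\tr(HL^\top L)$ by $\frac{\mu_H^2}{n}\tr(H^{1/2})^2$. Keeping that quantity untouched yields a tighter bound that depends directly on the constrained LDL factor and avoids losing a factor of $\mu_H^2$.

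First, I would rearrange the GPTQS fixed-point equation \cref{eq:gptqs}. Letting $E$ denote the stochastic rounding error of $\hat{Q}$ at each coordinate, expressed in the original weight units, direct manipulation of the fixed-point identity gives
\begin{equation*}
\hat W_{\mathrm{GPTQS}} - W \;=\; E\, L .
\end{equation*}
By Lemma 13 of \citet{chee2023quip}, the choice $c = 2(\log(4mn/\delta))^{-1}$ guarantees that, with probability at least $1-\delta/2$, the argument to $\hat h$ remains inside the quantization range for every coordinate, so no clamping occurs. Conditioned on this event, the entries of $E$ are independent, mean-zero, bounded in absolute value by $\tfrac{2 w_{\max}}{2^b-3}$, and satisfy $\mathbb{E}[E_{ij}^2] \le \tfrac{w_{\max}^2}{(2^b-3)^2}$.

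Next, I would substitute into the reconstruction error. Setting $M := L H L^\top \succeq 0$,
\begin{equation*}
\mathcal{L}(\hat W_{\mathrm{GPTQS}} - W, H) \;=\; \tr\bigl( E M E^\top \bigr) \;=\; \sum_{i=1}^m E_i M E_i^\top ,
\end{equation*}
and observe $\tr M = \tr(H L^\top L)$, which is precisely the quantity appearing in the statement. This is exactly the point where my argument departs from \citet{chee2023quip}: they bound $\tr M$ further via Hessian incoherence, whereas I would retain it. Each summand $E_i M E_i^\top$ is a quadratic form in independent bounded zero-mean variables, so I would apply a sub-exponential (Hanson--Wright / Bernstein-type) tail inequality to the aggregate quadratic form and obtain, with probability at least $1-\delta/2$, a bound of the form
\begin{equation*}
\tr( E M E^\top ) \;\le\; \tfrac{m\, w_{\max}^2}{(2^b-3)^2}\, \tr(H L^\top L)\cdot \tfrac{1}{2}\log\!\bigl(\tfrac{2mn}{\delta}\bigr) .
\end{equation*}
Substituting $w_{\max}^2 = \mu_W^2 \norm{W}_F^2 / (mn)$ to rewrite the prefactor, and combining with the $\delta/2$ failure probability of the clamping event via a union bound, yields the claimed inequality with total failure probability at most $\delta$.

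The main obstacle is the concentration step: obtaining only a single (rather than squared) logarithmic factor requires applying the tail inequality directly to the aggregate quadratic form $\tr(EME^\top)$, rather than concentrating column-by-column of $EL$ and then union bounding across $n$ columns, which is what inflates the log factor to $\log(\cdot)^2$ in the original Chee et al.\ argument. Carrying the exact constants through a Hanson--Wright-style bound for bounded quadratic forms and ensuring the union bound composes cleanly with the constrained-LDL failure event is the most delicate technical piece; the rest of the derivation is bookkeeping around the rearrangement $\hat W_{\mathrm{GPTQS}} - W = EL$.
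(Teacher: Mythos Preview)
Your high-level plan (retain $\tr(HL^\top L)$ rather than relaxing it via Hessian incoherence) matches the paper, and the algebraic reduction $\hat W_{\mathrm{GPTQS}}-W = EL$ with $\tr(LHL^\top)=\tr(HL^\top L)$ is correct. The concentration step, however, contains a genuine gap: the entries of $E$ are \emph{not} independent. Within a row the input to the stochastic rounder at column $j$ depends on the earlier rounding errors through the correction $(W-\hat W)(L^{-1}-I)$, so the distribution of $E_{ij}$ depends on $E_{i,1},\dots,E_{i,j-1}$. What you actually have is a martingale difference sequence (zero-mean and bounded \emph{conditionally} on the past), and standard Hanson--Wright does not apply to that structure. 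Martingale variants exist but are more delicate and would not obviously deliver the exact constant $\tfrac12\log(2mn/\delta)$ you need.

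The paper sidesteps this by invoking Lemma~12 of \citet{chee2023quip}, which is an Azuma-type bound built for precisely this martingale: for any fixed direction $u$ it controls $(\hat w - w)u$ with sub-Gaussian scale $\|Lu\|$. Taking $u$ over the eigenvectors of $H$, union-bounding over the $mn$ (row, eigenvector) pairs, and then summing already yields $\tr((\hat W-W)H(\hat W-W)^\top)\le m\,\tr(HL^\top L)\cdot\tfrac12\log(2mn/\delta)$ with a \emph{single} logarithm. You have also misdiagnosed the source of the squared log in Theorem~14 of \citet{chee2023quip}: it does not come from the directionwise union bound, but from the subsequent relaxation $\tr(HL^\top L)\le \mu_H^2\tr(H^{1/2})^2/(n\min(1,c))$ combined with $1/c=\tfrac12\log(4mn/\delta)$.
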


We can recover \Cref{thm:gptq-bound-quip} from \Cref{thm:gptq-bound-trace} by applying the following inequality, also proven in \citet{chee2023quip}.
\[
\tr\left(HL^\top L\right) \le \frac{1}{\min(1, c)}\mu_H^2 \tr\left(H^{1/2}\right)^2/n.
\]
However, we note that the Hessian incoherence $\mu_H$ has a problematic non-smooth dependence on the coordinates of the eigenvectors of $H$ and can actually be ill-defined when two or more eigenvalues coincide.
For this reason, to get a bound which is more robust and suitable for optimization, we rely on the different inequality (proven in Lemma~\ref{thm:trace-bound} in the Appendix):
\begin{equation}\label{eq:smoothbound}
\tr\left(HL^\top L\right) \le 2\left( \tr(H) - \frac{\Hoff^2}{2\tr(H)} \right) =: 2\UB,    
\end{equation}
where $\Hoff^2 = \sum_{i \neq j} |H_{ij}|^2$ varies with rotations. We compare this bound against the incoherence bound in Appendix~\ref{app:bound_compare}. This  yields the following corollary.

\begin{corollary}[GPTQ Bound]\label{remark:gptq-bound-trace-d}
The quantization error of $\hat W_{\mathrm{GPTQS}}$ with $c = 2\left(\log (\frac{4mn}{\delta})\right)^{-1}$ is bounded w.p. $1-\delta$ as
    \begin{align}
&\mathcal{L}(\hat W_{\mathrm{GPTQS}}-W, H)\nonumber\\
     &\le \frac{2\mu_W^2}{n(2^b-3)^2}\UB \, \norm{W}_F^2 \cdot\frac{1}
    {2}\log\left(\frac{2mn}{\delta}\right) 
\end{align}
\end{corollary}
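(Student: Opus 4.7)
The plan is to obtain this corollary by a one-line substitution that chains together the two ingredients already on the table: Theorem~\ref{thm:gptq-bound-trace} and the deterministic inequality~(\ref{eq:smoothbound}).

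Concretely, I would start from Theorem~\ref{thm:gptq-bound-trace}, which guarantees, with probability at least $1-\delta$ under the same choice $c = 2(\log(4mn/\delta))^{-1}$, that $\mathcal{L}(\hat W_{\mathrm{GPTQS}}-W,H)$ is upper bounded by an expression proportional to $\tr(HL^\top L)$, with $L$ the constrained LDL factor of $H$. I would then plug in the bound $\tr(HL^\top L) \le 2\,\UB$ from~(\ref{eq:smoothbound}) directly into this expression. Because the substitution only inflates the right-hand side and is entirely deterministic (it does not touch the stochastic rounding inside GPTQS), the high-probability event and the value of $\delta$ carry over intact from Theorem~\ref{thm:gptq-bound-trace}, and the factor of $2$ coming from~(\ref{eq:smoothbound}) gets absorbed into the numerator to give the stated constant $2\mu_W^2/(n(2^b-3)^2)$.

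At the level of the corollary itself there is no genuine obstacle: the argument is a monotone chaining of two inequalities, and one only needs to check bookkeeping on the constants and on the logarithmic factor, neither of which moves. The substantive content is hidden elsewhere, namely in Lemma~\ref{thm:trace-bound} in the appendix, which establishes~(\ref{eq:smoothbound}) by leveraging the optimality and feasibility of the constrained LDL in Definition~\ref{def:constrained-ldl} to bound $\tr(HL^\top L)$ in terms of $\tr(H)$ and the off-diagonal mass $\Hoff$. Once that lemma is treated as a black box, the corollary follows in a single line of substitution into Theorem~\ref{thm:gptq-bound-trace}.
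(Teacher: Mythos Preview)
Your proposal is correct and matches the paper's own derivation exactly: the corollary is obtained by substituting the deterministic inequality $\tr(HL^\top L)\le 2\,\UB$ from~(\ref{eq:smoothbound}) (Lemma~\ref{thm:trace-bound}) into the high-probability bound of Theorem~\ref{thm:gptq-bound-trace}, with the probability and constants carrying over unchanged.
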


\textbf{Improving error bounds with rotations.} 
Applying rotations to $W$ or $H$ only affects the error upper bounds via the weight incoherence $\mu_W$ for RTN and GPTQ, the hessian incoherence $\mu_H$ (via $w_{\mathrm{max}}$) or $\UB$ (via $\Hoff$) for GPTQ. In the following section we design proxy objectives that find rotations to minimize the error upper bounds.

\section{OptRot}\label{sec:optrot}
In this section we design the rotation learning problems solved by OptRot and \OptRotH by combining \Cref{eq:kl-approx} with approximations to the upper bound derived in the previous section. In particular, the final optimization problems will be of the form
\begin{align}\label{eq:optrot-obj}
    \min_{R_1, R_{2,1},\dots, R_{2,L}}\sum_{l,s} l_{\text{rot} }(\tilde W^{(l,s)}, \tilde H^{(l,s)})
\end{align}
where $\tilde W, \tilde H$ are the rotated weight and Hessian matrices and $l_{\text{rot}}$ is a suitable approximation of the GPTQ bound in Corollary~\ref{remark:gptq-bound-trace-d}.
For llama models, the weights are rotated as per ~\Cref{fig:rotations}, $ \tilde W^{(l,s)} = R_1^\top W^{(l,s)} $for $s \in \{\text{q},\text{k}, \text{gate}, \text{up}\}$ and $ \tilde W^{(l,\text{v})} = R_1^\top W^{(l,\text{v})}R_{2,l}, \tilde W^{(l,\text{o})} = R_{2,l}^\top W^{(l,\text{o})}R_{2,l}, \tilde W^{(l,\text{down})} = R_4^\top W^{(l,\text{down})}  R_1.$
For the optimization, we use Cayley-SGD\citep{li2020efficient}, which ensures that at each step the rotations are inside the Stiefel Manifold.

Focusing on a single layer and using the definition of weight incoherence, where $w_{\text{max}} = \max_{i,j}|W_{i,j}|$, we can rewrite the bound  in \Cref{thm:gptq-bound-trace} as
\begin{align}\label{eq:layerwise-error-bound}
\mathcal{L}(W - \hat W, H) \le \frac{w_{\text{max}}^2\tr\left(HL^\top L\right)}{(2^b-3)^2} \cdot\frac{m}
    {2}\log\left(\frac{2mn}{\delta}\right).
\end{align}

Since we keep the quantization bit-width $b$ constant for all layers, we can remove it from our objective.
For simplicity, we also ignore the term $\frac{m}{2}\log\left(\frac{2mn}{\delta}\right)$, where $m$ is the output dimension and $n$ is the input dimension of the weight matrix, arriving at 
\begin{align}\label{eq:joint-obj}
  \ell_{\operatorname{rot}} = w_{\text{max}}^2 \tr(HL^\top L)\,.
\end{align}

\subsection{Data-Free Objectives - OptRot}
We begin with introducing OptRot, by choosing the simplest data-free objective.
OptRot focuses on the weight outliers, ignoring the $\tr(HL^\top L)$ term.
Instead of directly minimizing $l_{\text{rot}} = w_{\text{max}}$,  we replace the non-smooth $ w_{\text{max}}$ with $\|\operatorname{vec}(\tilde W)\|_p^p = \sum_{i,j} |\tilde W_{i,j}|^p$ where $\operatorname{vec}(\cdot)$ flattens matrices into vectors, and a high $p$-norm provides a smooth approximation to the infinity norm. Concretely, we solve the following problem over our smooth incoherence loss:
\begin{align}\label{eq:incobj}
    \min_{R_1, R_{2,1},\dots, R_{2,L}} \sum_{l,s} l_{\text{rot}}, \: \qquad \, l_{\text{rot}}=\|\operatorname{vec}(\tilde W^{(l,s)})\|_p^p
\end{align}

We choose $p=4$ in our experiments as larger $p$ showed no improvement in preliminary results and is related to the kurtosis, another measure of outliers \citep{akhondzadeh2025kurtail}.
We also propose an alternative objective, OptRot-v2, which uses the squared $p$ norm as the smooth objective. 
In this case $l_{\text{rot}}=\|\operatorname{vec}(\tilde W^{(l,s)})\|_p^2$, which matches the bound more closely.

\subsection{Data-Dependent Objectives - \OptRotH}

We now look at jointly the sum of $w_{\text{max}}^2 \tr(HL^\top L)$ over all linear layers. 
The term $\tr(HL^\top L)$ is a data-dependent term and requires a calibration set to compute the Hessian $H$. After that, as done in practice for GPTQ, we can use the LDL decomposition instead of the constrained LDL and compute $\tr(D)$ in place of $\tr(HL^\top L)$ via the Cholesky decomposition, which supports backpropagation in PyTorch.
While this introduces additional cost and data dependence, this step is also necessary for quantizing with GPTQ downstream.
However, while GPTQ uses the Hessians to do a one-time Cholesky decomposition, optimizing the joint objective would require to backpropagate through the Cholesky decomposition at every step, since the Hessian will change with the rotation. Furthermore, approximating the constrained LDL ($c = 2\left(\log (\frac{4mn}{\delta})\right)^{-1}$) with the LDL ($c = \infty$) makes the resulting bound not valid and indeed the constrained LDL is usually quite different from the standard LDL as we see in ~\Cref{fig:correction-bound} in the appendix.

Therefore, we also consider the cheaper and smooth upper bound  $\tr(HL^\top L) \leq 2\UB$ in \Eqref{eq:smoothbound}, that is still affected by rotations. The bound requires $O(n^2)$ operations and can be higly parallelized, while the Cholesky decomposition has complexity $O(n^3)$ and $n$ sequential steps.

A simpler  upper bound of $\tr(HL^\top L)$ is $\tr(H)$ (set $L = I$).
Since, $\tr(H)$ is rotation invariant, it only needs to be computed once and serves only as a layerwise importance score for the approximation of $w_{\text{max}}^2$.

Thus, our joint optimization objective, can be a combination of a bound on the $\tr(HL^\top L)$ or $\tr(D)$, and its product with a smooth approximation of $w_{\text{max}}$, summed over all layers.
For $w_{\text{max}}$, we consider the $p$-norm of the flattened weights $\|\operatorname{vec}(\tilde W)\|_p$ and its square, as it appears in the bound. Similar to OptRot, we choose $p=4$.

We perform a sweep over the six resulting joint objectives on a Llama-3.2-1B model to find the best performing ones.
We report in ~\cref{fig:multi-obj} the KL divergence between the model quantized with GPTQ and the original one, after learning rotations with each objective.
\begin{figure}[ht!]
    \centering
    \includegraphics[width=0.85\linewidth]{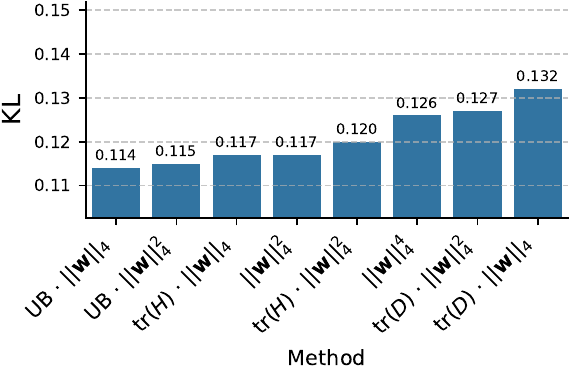}
    \caption{Comparing the KL divergence of data-dependent objectives after learning rotations and $4$-bit weight quantization with GPTQ on Llama-3.2-1B.
    }
    \label{fig:multi-obj}
\end{figure}
We observe that rotations learnt by minimizing $\text{UB}\cdot\|\operatorname{vec}(\tilde W)\|_4$ and $\text{UB}\cdot\|\operatorname{vec}(\tilde W)\|_4^2$ obtain the smallest KL divergence after quantization and choose them as our joint objective, which we term \OptRotH (see ~\Cref{tab:method-optrot}).
Note that optimizing the joint objective from ~\Cref{eq:joint-obj} yields lower KL divergence than the data-free versions.
\begin{table}[ht]
\centering
\caption{Introducing all versions of OptRot.}
\label{tab:method-optrot}
\begin{tabular}{l p{2.5cm} c}
\toprule
\textbf{Method} & \textbf{Objective} ($l_{\text{rot}}$) & \textbf{Data-Free} \\
\midrule
OptRot & $\|\operatorname{vec}(\tilde W)\|_4^4$& Yes  \\
OptRot-v2 & $\|\operatorname{vec}(\tilde W)\|_4^2$& Yes  \\
\OptRotH & $\text{UB}\cdot\|\operatorname{vec}(\tilde W)\|_4$& No  \\
\OptRotH-v2 & $\text{UB}\cdot\|\operatorname{vec}(\tilde W)\|_4^2$& No  \\
\bottomrule
\end{tabular}
\end{table}

\begin{figure*}[ht!]
    \centering
    \includegraphics[width=\linewidth]{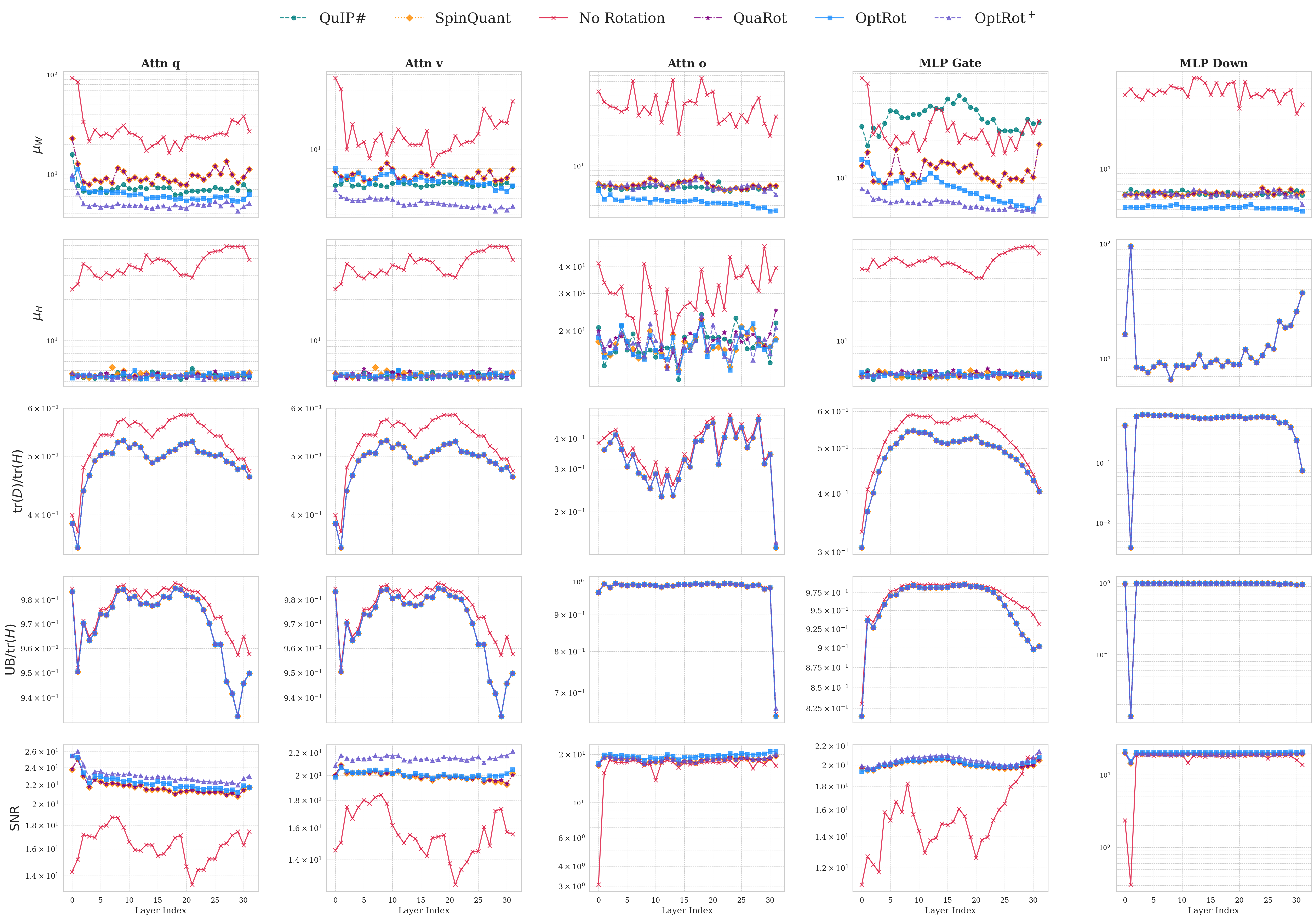}
    \caption{Weight incoherence $\mu_W$ optimized by OptRot (top row), Hessian incoherence $\mu_H$ (second row), $\tr(D) / \text{Tr}(H)$ (third row), the upper bound $\text{UB} / \text{Tr}(H)$ (fourth row) and the SNR after quantization with GPTQ (bottom row) for Llama-3.1-8B. 
    }
    \label{fig:inc-main}
\end{figure*}

\section{Experiments}
We conduct experiments on the Llama-3 and Qwen3 series of models, for weight only and weight + activation quantization.
Weights are quantized with GPTQ, calibrated on the C4 \cite{c4-dataset} dataset, activations are quantized with RTN.
We report results on Wikitext \citep{merity2017pointer} and six zero-shot commonsense reasoning benchmarks: Piqa \citep{bisk2020piqa}, Hellaswag \citep{zellers2019hellaswag}, Arc-E and Arc-C \citep{clark2018think}, Lambada \citep{paperno2016lambada} and Winogrande \citep{sakaguchi2020winogrande}. 
We also report the KL divergence, on the C4 dataset, between the quantized and original model.
Along with the methods outlined in ~\Cref{tab:method-optrot}, we report results with a cheaper version of OptRot, which chooses the top $50$ weights with the largest loss to learn rotations. 
Due to space constraints we defer some results to the appendix.

\textbf{Baselines.}
We give a brief outline of the baselines used.
\textbf{QuaRot} uses fusible Hadamard rotations and already achieves significant improvements over no rotations.
\textbf{QuIP\#} also uses Hadamard rotations, but applies two rotations (one to the left and the other to the right) to each weight matrix in an online manner instead of a single fusible rotation like QuaRot.
We only report the incoherence metrics for QuIP\# but do not quantize with it as the rotations are not fusible. 
\textbf{SpinQuant} learns rotations by performing Riemmanian gradient descent where in each forward pass activations are quantized with RTN. Therefore, rotations are learned to account only for the error in activation quantization, while GPTQ takes care of the weight quantization error. In theory, the GPTQ quantization step could also be included in the forward pass of the rotation learning phase, but this would make that phase much slower due to the sequential nature of GPTQ.
However, when we quantize weights using RTN, including weight quantization also in the rotation learning phase can be done and yields the best results for SpinQuant (as we show in Appendix~\ref{sec:rtn-results}).
\textbf{OSTQuant} is similar to SpinQuant but minimizes the KL divergence between the quantized and original model, instead of the cross-entropy loss of the quantized model,  and learns a linear invertible scaling in addition to rotations.
\textbf{DartQuant} proposes to use the Whip loss to mitigate activation outliers. Since we focus on weight quantization, we modify the method by applying the same loss not to the activations but to the weights -- denoted by DartQuant(W), as an alternative method to minimize weight outliers.

\begin{table*}[ht!]
    \centering
    \caption{Results for weight-only quantization at 4-bits with GPTQ.}
    \label{tab:weight-only}
    \setlength{\tabcolsep}{4pt} %
    \begin{tabular}{l@{\hskip 0.5cm}ccc@{\hskip 0.5cm}ccc@{\hskip 0.5cm}ccc}
        \toprule
        \multirow{2}{*}{\textbf{Method}} & \multicolumn{3}{c}{Llama-3.2-1B} & \multicolumn{3}{c}{Llama-3.2-3B} %
        & \multicolumn{3}{c}{Llama-3.1-8B} \\
        \cmidrule(lr){2-4} \cmidrule(lr){5-7} %
        \cmidrule(lr){8-10}
         & Acc $\uparrow$ & Wiki $\downarrow$ & KL $\downarrow$ & Acc $\uparrow$ & Wiki $\downarrow$ & KL $\downarrow$ %
        & Acc $\uparrow$ & Wiki $\downarrow$ & KL $\downarrow$ \\
        \midrule
        FP16 & $56.77$ & $9.76$& $0$ & $64.72$& $7.81$& $0$& $70.29$& $6.24$ & $0$\\
        \midrule
        No Rotation & $49.43$ & $13.86$ & $0.36$ & $56.28$ & $11.78$ & $0.35$ %
        & $67.39$ & $7.47$ & $0.23$ \\
        QuaRot & $55.26$ & $10.79$ & $0.136$ & $63.48$ & $8.36$ & $0.097$ %
        & $69.17$ & $6.72$ & $0.0926$ \\
        DartQuant(W) &$55.23$ & $10.79$& $0.131$ &$\textbf{63.92}$ &$8.36$ & $0.096$& $68.81$ & $7.21$&  $0.162$\\
        OSTQuant$^{*}$ & $49.25$ & $10.76$ & - & $62.29$ & $\textbf{8.28}$   & - & $69.14$ & $\underline{6.65}$ & -\\
        SpinQuant & $54.9$ & $10.73$ & $0.137$ & $63.05$ & $8.37$ & $0.096$ %
        & $69.46$ & $6.71$ & $0.0925$ \\
        \midrule
        OptRot (top-50) & $\underline{55.53}$ & $10.68$ & $0.123$ & $63.48$ & $8.32$ & $\underline{0.093}$ %
        & $69.5$ & $\underline{6.65}$ & $0.0849$ \\
        OptRot & $\textbf{55.65}$ & $10.62$ & $0.125$ & $63.41$ & $\textbf{8.28}$ & $\underline{0.093}$ %
        & $69.5$ & $\textbf{6.64}$ & $0.0866$\\
        \midrule
        OptRot-v2& $55.14$& $\underline{10.59}$ & $0.116$ & $\underline{63.89}$ & $8.33$& $0.99$ & $66.36$& $8.37$ & $0.264$\\
        \OptRotH & $55.07$& $\textbf{10.56}$ & $\textbf{0.114}$& $63.43$ & $\underline{8.3}$&$\textbf{0.086}$ & $\textbf{70.16}$ & $\underline{6.65}$ & $\textbf{0.08}$ \\
        \OptRotH-v2 & $55.23$& $10.63$ & $\underline{0.115}$ & $63.64$ & $8.32$& $\textbf{0.086}$& $\underline{69.74}$& $6.68$ & $\underline{0.083}$\\
        \bottomrule
    \end{tabular}
\end{table*}

\subsection{Layerwise Results}

Figure \ref{fig:inc-main} (top row) plots the weight incoherence $\mu_W$. 
Hadamard rotations with QuaRot improve the weight incoherence over no rotations.
SpinQuant does not further improve weight incoherence.
QuIP\# can improve it with additional online rotations which is more expensive.
\RG{Quip sharp was removed, maybe it could still be included in some appendix plots? or we could comment on this only.}
\AG{The QUIP\# numbers with the stable hessians are missing. I will modify the comment for now and add the plot later in the Appendix, need to write an additional script for it.}
\AG{Removed QuIP\# from the plot}
OptRot finds smallest weight incoherence in the out projection and down projection layers, while \OptRotH is the smallest in the other layers.
Both OptRot and \OptRotH improve over the other methods.
The down-projection layer in particular is known to be affected by outliers \citep{dettmers2022gpt3}, here OptRot consistently achieves lower weight incoherence.

\Cref{fig:inc-main} (second row) plots the Hessian incoherence $\mu_H$ as per the GPTQ bound in \Cref{thm:gptq-bound-quip} derived by \citet{chee2023quip}. 
The third row plots $\tr(D)/\tr(H)$ and the fourth row plots the upper bound (see \Cref{eq:smoothbound}) normalized by $\tr(H)$. 
Note that the $\tr(H)$ is rotation invariant, i.e, constant for a layer across all methods.
We observe that the Hessian incoherence, $\tr(D)$ and its upper bound are almost identical across different rotations, except for the identity which increases these quantities.

We also measure the effect of learned rotations on the quantization error in each layer, by plotting the Signal-to-Noise Ratio (SNR) in \Cref{fig:inc-main} (last row).
The SNR is defined in \Cref{app:snr} in the appendix.
A small quantization error implies a high SNR.
When quantizing with GPTQ, OptRot achieves the highest SNR for the out projection and down projection layers, while \OptRotH achieves the highest SNR for the other layer types.
Overall, we note that improving the upper bounds to the layerwise quantization errors by having a lower weight incoherence translates to higher SNR, validating our approach.

Since we observe that except for the identity rotation, all methods achieve almost identical data-dependent metrics (Hessian incoherence, $\tr(D)$ and $\text{UB}$), we ask ourselves: why does optimizing the data-dependent objective in \OptRotH improve downstream performance as we show in the next section?
A potential explanation is that the data-dependent terms in \OptRotH  play the role of layerwise importance scores for the weight incoherence when summing over layers, in \Cref{eq:optrot-obj}. This allows to have a better approximation of the KL divergence compared to OptRot.

\begin{table*}[h!]
    \centering
    \caption{Results for W4A8 quantization, where activations are quantized with RTN and weights with GPTQ.}
    \label{tab:a8w4_performance}
    \begin{tabular}{l@{\hskip 0.7cm}ccccccccc}
        \toprule
        \multirow{2}{*}{\textbf{Method}} & \multicolumn{3}{c}{Llama-3.2-1B} & \multicolumn{3}{c}{Llama-3.2-3B} & \multicolumn{3}{c}{Llama-3.1-8B} \\
        \cmidrule(lr){2-4} \cmidrule(lr){5-7} \cmidrule(lr){8-10}
        & Acc $\uparrow$ & Wiki $\downarrow$ & KL $\downarrow$ & Acc $\uparrow$ & Wiki $\downarrow$ & KL $\downarrow$ & Acc $\uparrow$ & Wiki $\downarrow$ & KL $\downarrow$ \\
        \midrule
        FP16 & $56.77$ & $9.76$ & $0$ & $64.72$ & $7.81$ & $0$ & $70.29$ & $6.24$ & $0$\\
        \midrule
        No Rotation & $48.9$ & $14.49$ & $0.367$ & $54.93$ & $13.03$ & $0.367$ & $66.38$ & $7.65$ & $0.249$ \\
        Quarot & $55.2$ & $10.8$ & $0.137$ & $63.23$ & $8.37$ & $0.099$ & $69.17$ & $6.72$ & $0.094$ \\
        SpinQuant & $55.21$ & $10.75$ & $0.133$ & $\textbf{63.95}$ & $8.34$ & $0.098$ & $69.5$ & $6.72$ & $0.096$ \\
        \midrule
        OptRot (top-50) & $55.1$ & $10.68$ & $0.124$ & $63.28$ & $8.32$ & $\textbf{0.092}$ & $69.4$ & \underline{$6.66$} & \underline{$0.082$} \\
        OptRot (all) & $\textbf{55.55}$ & $10.63$ & $0.126$ & $63.33$ & $\textbf{8.29}$ & \underline{$0.094$} & $69.32$ & $\textbf{6.65}$ & $0.087$\\
        \midrule
        OptRot-v2 & \underline{$55.34$} & \underline{$10.6$} & $0.117$ & $63.64$ & $8.34$ & $0.1$ & $66.42$ & $8.36$ & $0.264$\\
        \OptRotH & $55.05$ & $\textbf{10.58}$ & $\textbf{0.115}$ & $62.45$ & \underline{$8.31$} & $0.87$ & $\textbf{70.17}$ & \underline{$6.66$} & $\textbf{0.081}$\\
        \OptRotH-v2 & $55.31$ & $10.64$ & \underline{$0.116$} & \underline{$63.76$} & $8.33$ & $0.87$ & \underline{$69.73$} & $6.69$ & $\textbf{0.081}$\\ 
        \bottomrule
    \end{tabular}
\end{table*}

\begin{table*}[ht!]
    \centering
    \caption{4-bit weight-only quantization with GPTQ on Qwen3 models.}
    \label{tab:weight-only-qwen3}
    \setlength{\tabcolsep}{4pt} %
    \begin{tabular}{l@{\hskip 0.5cm}ccc@{\hskip 0.5cm}ccc@{\hskip 0.5cm}ccc}
        \toprule
        \multirow{2}{*}{\textbf{Method}} & \multicolumn{3}{c}{Qwen3-1.7B} & \multicolumn{3}{c}{Qwen3-4B} 
        & \multicolumn{3}{c}{Qwen3-8B} \\
        \cmidrule(lr){2-4} \cmidrule(lr){5-7} %
        \cmidrule(lr){8-10}
         & Acc $\uparrow$ & Wiki $\downarrow$ & KL $\downarrow$ & Acc $\uparrow$ & Wiki $\downarrow$ & KL $\downarrow$ 
        & Acc $\uparrow$ & Wiki $\downarrow$ & KL $\downarrow$ \\
        \midrule
        FP16 & $57.09$ & $16.62$ &$0$ & $63.97$  & $13.56$& $0$& $67.4$ & $9.81$& $0$  \\
        \midrule
        No Rotation & $51.49$ & $39.25$ & $0.5$ & $60.79$& $14.43$ & $0.21$ & $65.98$ & $11.62$& $0.197$\\
        QuaRot & $54.75$ & $17.75$& $0.089$ & $62.3$& $14.43$ &$0.064$ & $66.53$ & $10.12$&$0.043$\\
        SpinQuant &$\mathbf{55.1}$ & $17.75$ & $0.091$& $62.06$ & $\mathbf{13.81}$& $0.064$& $66.62$ & $9.93$ & $0.041$\\
        \midrule
        OptRot & $55.03$ & $\mathbf{17.5}$& $\mathbf{0.083}$& $\mathbf{62.9}$& $14.43$& $\mathbf{0.059}$ & $\mathbf{66.74}$& $9.93$& $\mathbf{0.039}$\\
        \bottomrule
    \end{tabular}
\end{table*}

\subsection{End-to-End Results}

\textbf{OptRot improves weight-only quantization.}
Results for weight-only quantization with GPTQ, are shown in Table \ref{tab:weight-only} at $4$-bits.
Along with SpinQuant and QuaRot, we also compare with recent methods OSTQuant and DartQuant.
OSTQuant differs from our setting in ~\Cref{fig:rotations} as it uses an invertible scaling in addition to learnable rotations (shown with $*$).
DartQuant follows the same setup as OptRot but optimizes a different (Whip) loss, which we observe achieves worse final weight incoherence compared to OptRot.  
OptRot consistently outperforms other methods, even with the cheaper top-$50$ version.
This result also holds in the absence of online rotations (see Table \ref{tab:no_r3r4_performance}) and at $3$-bit quantization (see Table \ref{tab:3bit_performance}).

With the underperforming RTN quantization method, SpinQuant outperforms OptRot (see Appendix \ref{sec:rtn-results}), since in this case we can use weight quantization also when learning rotations using SpinQuant.
Note that since the OptRot objective does not contain any data-dependent terms, it is a valid proxy also for the RTN bound in \Cref{thm:rtnbound}.

\textbf{Activation Quantization.} 
We also report results for quantizing both weights and activations. 
The weights are quantized with GPTQ and the activations with RTN.
OptRot matches or outperforms SpinQuant in the A8W4 setting (see Table \ref{tab:a8w4_performance}).
This setting can efficiently serve large models without a significant drop in performance from FP16 \citep{lin2024qserve}.
In contrast, in the A4W4 setting, we see a significant drop in performance compared to the full precision model (as seen in Table \ref{tab:a4w4_performance}).
Here, OptRot performs worse than SpinQuant and often even QuaRot.
We hypothesize that this is due to the activation outliers being dominant compared to the weight outliers and
improving the weight quantization error bound with OptRot worsens activation quantization.
This suggests a trade off between weight and activation outliers.

\textbf{Results on Qwen models.}
We also report results for OptRot on the Qwen3 family of models for weight-only quantization in ~\Cref{tab:weight-only-qwen3}.
The Qwen3 models follow the broad structure of Llama models with the additional RMSnorm after the key and query layers. However, this does not change how the rotations are applied.
Here too, we observe that OptRot always achieves a lower KL divergence in comparison to SpinQuant and QuaRot.

\textbf{Method Selection.}
Based on ~\Cref{tab:weight-only} and ~\Cref{tab:a8w4_performance}, we find that OptRot improves the KL divergence in comparison to other methods.
A lower KL divergence implies better quantization, and can be further improved with a better calibration set.
\OptRotH improves over OptRot, but the improvements are small and come at the cost of optimizing the rotations over the Hessians.
Given the observation that the data-dependent terms only play a role of layerwise importance, one could only compute Hessians once in the beginning and use Hadamard rotations to get the optimization objective for learning rotations making \OptRotH cheaper.
We recommend choosing \OptRotH only if Hessian computation for rotation learning is feasible, but in most cases the simpler data-free OptRot is sufficient to achieve good performance. 
\RG{We can mention that we can improve the calibration data possibly to improve the other metrics (not sure though).}
\AG{Given the observation that the data-dependent terms only plays a role of layerwise importance, one could only compute Hessians once in the beginning and use Hadamard rotations to get the optimization objective for learning rotations.}

\section{Conclusion}
We introduce OptRot and \OptRotH, to learn rotations for post-training weight quantization by minimizing principled, cheaper proxy objectives to the quantization error.
Unlike other methods, learning rotations with OptRot does not require quantizing the model or loading the model on GPU(s) and peforming multiple optimization steps.  
Building on \citet{tseng2024quip}, we leverage the fact that rotations control the weight incoherence and feature correlation, which in turn bound the weight quantization error for GPTQ.
OptRot proposes to optimize these quantities to improve these bounds, which translates to improved downstream performance. 
When including activation quantization, while OptRot is competitive with SpinQuant in the A8W4 regime, it underperforms in the A4W4 one due to a trade-off between weight and activation outliers.

\newpage
\bibliography{icml_references}
\bibliographystyle{icml2026}

\appendix
\onecolumn
\newpage

\section{Round to Nearest Error Bound}\label{app:RTN}

Round-to-Nearest (RTN) is a simple quantization method which simply rounds every weight element to its nearest discrete value on the quantization grid.
For a bit-width $b$, the weight values must be scaled such that $0 \leq W_{ij} \leq 2^b - 1$ assuming symmetric quantization.

Each element is rescaled as $W_{ij} \rightarrow \frac{2^b-1}{2}\left(\frac{W_{ij}}{w_{\max}} + 1\right)$ with $w_{\max} = \max_{i,j} |W_{i,j}|$ followed by rounding to the nearest integer.
The error achieved by the RTN procedure can be upper bounded by the extreme case where each element is exactly in the middle of a quantization interval and incurs an error of $\frac{\Delta}{2}$ where $\Delta = \frac{2w_{\max}}{(2^b - 1)}$.
The following theorem derives this bound.

\begin{theorem}[Worst Case Error Bound for RTN]
Let $W \in \R^{m \times n}$ be a weight matrix and $H \in \R^{n \times n}$ be a symmetric positive semi-definite (PSD) matrix. Let $\hat{W}$ be the matrix obtained by applying uniform $b$-bit Round-to-Nearest (RTN) quantization to each element of $W$. The quadratic quantization error is bounded as follows:
$$ \tr((\hat{W} - W)H(\hat{W} - W)^\top) \leq \frac{\mu_W^2}{(2^b-1)^2} \lambda_{\max}(H)  \norm{W}^2_F $$
where $\mu_W$ is the weight incoherence and $\lambda_{\max}(H)$ is the maximum eigenvalue of the matrix $H$.
\end{theorem}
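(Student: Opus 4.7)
The plan is to decouple the weight-quantization error from the action of $H$ by a standard PSD inequality, and then to use the elementwise RTN error bound together with the definition of $\mu_W$. Let $E := \hat W - W$. Because the scale is set to $s = w_{\max}$, the quantization step on the integer grid maps to a weight-space step of size $\Delta = 2 w_{\max}/(2^b-1)$, and rounding to the nearest grid point guarantees $|E_{ij}| \le \Delta/2 = w_{\max}/(2^b-1)$ for all $i,j$.

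The first step is to separate $E$ from $H$. Since $H$ is PSD, write $H = Q \Lambda Q^\top$ with $\Lambda = \mathrm{diag}(\lambda_1,\dots,\lambda_n)$ and $\lambda_i \le \lambda_{\max}(H)$. Then
\begin{equation*}
\tr(E H E^\top) = \tr\bigl((E Q)\Lambda (E Q)^\top\bigr) = \sum_{i} \lambda_i \,\|(EQ)_{:,i}\|_2^2 \le \lambda_{\max}(H)\, \|EQ\|_F^2 = \lambda_{\max}(H)\, \|E\|_F^2,
\end{equation*}
where the final equality uses orthogonal invariance of the Frobenius norm.

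The second step is to control $\|E\|_F^2$ by the elementwise bound and then rewrite it in terms of $\mu_W$. We have
\begin{equation*}
\|E\|_F^2 = \sum_{i,j} E_{ij}^2 \le mn \cdot \frac{w_{\max}^2}{(2^b-1)^2}.
\end{equation*}
By the definition $\mu_W = \sqrt{mn}\, w_{\max}/\|W\|_F$, we have $mn\, w_{\max}^2 = \mu_W^2\, \|W\|_F^2$, so $\|E\|_F^2 \le \mu_W^2 \|W\|_F^2/(2^b-1)^2$. Combining with the PSD inequality from the first step yields the claimed bound.

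This proof is essentially a chain of standard inequalities, so there is no serious obstacle; the only point that deserves care is the implicit assumption that $s = w_{\max}$ places every weight inside the quantization range so that the $\Delta/2$ rounding bound is valid at every entry (no clamping error), which is explicit in the theorem statement.
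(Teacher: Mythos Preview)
Your proof is correct and follows essentially the same two-step structure as the paper: first bound $\tr(EHE^\top)\le\lambda_{\max}(H)\|E\|_F^2$, then bound $\|E\|_F^2\le mn\,w_{\max}^2/(2^b-1)^2$ and rewrite via $\mu_W$. The only cosmetic difference is that the paper obtains the spectral step by cycling the trace and invoking Von Neumann's trace inequality, whereas you diagonalize $H$ directly and use orthogonal invariance of the Frobenius norm; both are standard routes to the same inequality.
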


\begin{proof}
Let $\eta := \hat{W} - W$ be the quantization error matrix. The objective function is $\tr(\eta H \eta^\top)$.
By the cyclic property of the trace, we can rearrange the terms:
    $$\tr(\eta H \eta^\top) = \tr(H \eta^\top \eta) \,.$$
We apply the Von Neumann’s trace inequality $\tr(AB) \le \sum_{i=1}^{n}\lambda_i(A)\lambda_i(B)$, where $(\lambda_i(A))_{i=1}^n$, $(\lambda_i(B))_{i=1}^n$ are the eigenvalues of $A$ and $B$, which holds if $A, B$ are PSD. In our case $A = H$, $B = \eta^\top \eta$
    \begin{align*}
        \tr(H \eta^\top \eta) &\leq \sum_{i=1}^{n}\lambda_i(H)\lambda_i(\eta^\top\eta)
        \leq \lambda_{\max}(H) \tr(\eta^\top\eta)     
        =  \lambda_{\max}(H)\|\eta\|_F^2  \,.   
    \end{align*}    

Next, we bound the Frobenius norm of the error matrix $\eta$. For a uniform $b$-bit quantization scheme, the width of each quantization interval is $\Delta = \frac{2w_{\max}}{(2^b - 1)}$. For Round-to-Nearest, the maximum error for any single element is half of this interval width:
    $$|\eta_{ij}| \le \frac{\Delta}{2} = \frac{w_{\max}}{2^{b}-1}$$
Using this per-element bound, we can bound the squared Frobenius norm, which is the sum of the squared magnitudes of all elements:
    $$ \|\eta\|_F^2 = \sum_{i=1}^m \sum_{j=1}^n |\eta_{ij}|^2 \le \sum_{i=1}^m \sum_{j=1}^n \left(\frac{w_{\max}}{2^{b}-1}\right)^2 = mn \left(\frac{w_{\max}}{2^{b}-1}\right)^2 $$
Finally, substituting the bound for $\eta$ yields
    $$\tr(\eta H \eta^\top) \le mn \left(\frac{w_{\max}}{2^b - 1}\right)^2 \lambda_{\max}(H)\,.$$
We conclude the proof by using the definition of $\mu_W$.
\end{proof}

\section{Approximating the KL Divergence with the Layerwise Errors}\label{app:approx-kl}
In this section, we describe the steps needed to approximate the KL divergence between quantized and original model as the sum of layerwise reconstruction errors. Such an approximation can be inaccurate but will be cheaper to minimize. 

We begin by recalling our main objective:
\begin{align*}
    \mathbb{E}_z  \mathrm{KL} (p_\theta(\cdot \mid  z) \,\|\ p_{\hat \theta}( \cdot \mid z))\,, \label{eq:kl}
\end{align*}
where $\mathrm{KL}(p \,\|\ q) = \mathbb{E}_{y \sim p} \left[\log \left(p(y)/q(y)\right)\right]
$ and $p_\theta(\cdot \mid  z)$ denotes the output probability distribution of a language model given an input sequence $z$. The original weights are denoted by $\theta$ while the rotated and then quantized weights with $\hat \theta$.

First, we approximate this objective by a second-order Taylor expansion around $\theta$, yielding the quadratic surrogate
\begin{equation}
\label{eq:global-quad}
 \tfrac{1}{2}\,(\theta-\hat\theta)^\top F(\theta)\,(\theta-\hat\theta), \quad F(\theta) := \E_{z, y \sim p_\theta (\,\cdot \mid z)} \left[ \nabla_\theta \log p_\theta(y \mid z) \, \nabla_\theta \log p_\theta(y \mid z)^\top\right]
\end{equation}
where $F$ denotes the Fisher information matrix (FIM), equal to the Hessian of the KL at $\theta$ under standard regularity conditions. Directly solving \eqref{eq:global-quad} is intractable due to the size of $F$ which  for LLMs is greater than a billion squared.

We adopt a block-diagonal approximation across layers, splitting $\theta$ into $\{W^\ell\}_\ell$ where each $W$ correspond to each linear layer that will be quantized, and making a block-diagonal approximation of $F(\theta)$ with each block corresponding to one layer:
\begin{equation}
\label{eq:layer-quad}
\sum_{W, F_W} \tfrac{1}{2}\,\Vec(W-\hat W)^\top F_W\,\Vec(W-\hat W),
\qquad
F_W = \E\!\big[\Vec(\nabla_W \log p_\theta)\,\Vec(\nabla_W \log p_\theta)^\top\big], 
\end{equation}
In language models, the inputs and outputs probabilities are sequences. If we let \(\{x_t\}_{t=1}^T \) be the sequence of inputs to layer $W$, with outputs \(y_t = W x_t\) and output-gradients \(g_t = \frac{d}{dy_t} \log p_\theta\), then
\[
\nabla_W \log p_\theta = \sum_{t=1}^T  g_t x_t^\top,
\qquad
\Vec(\nabla_W \log p_\theta) = \sum_{t=1}^T  (x_t \otimes g_t).
\]
Thus, the fisher information matrix for layer $W$ can be written as
\begin{equation*}
F_W =\E\!\Bigg[\sum_{t,s=1}^T  (x_t \otimes g_t ) ( x_s \otimes g_s)^\top\Bigg]= \E\!\Bigg[\sum_{t,s=1}^T  (g_t g_s^\top) \otimes (x_t x_s^\top)\Bigg],
\end{equation*}
Computing the full fisher for a single layer is still intractable, therefore we use the following kronecker factorization
\[
F_W \approx \,G\otimes H,
\qquad
G=\E\Big[ T\sum_{t=1}^T g_t g_t^\top\Big],\;H=\E\Big[T \sum_{t=1}^T x_t x_t^\top\Big],
\]
Which holds under the assumption that (i) $g_i$ is independent from $x_j$ for any $i,j$ pair (gradients are independent from the input) and (ii) the random variables $(x_i, g_i), \dots (x_t, g_t)$ are i.i.d..

Finally, if we approximate $G$ with the identity matrix, we get to the layerwise reconstruction error since if $G = I$, then 
\begin{equation}
\sum_{W, F_W} \tfrac{1}{2}\,\Vec(W-\hat W)^\top F_W\,\Vec(W-\hat W) = \sum_{W,H} \tr \big( (\hat{W} - W) H (\hat{W} - W)^\top  \big) = \mathcal{D}_{KL}
\end{equation}
In practice, $H$ is approximated for each layer using a calibration set. Note that since we are optimizing over rotations which are sometimes shared across layers, the optimization problem is not separable across layer as it is for the quantization step.

\section{Using Hessian Information for OptRot}
In OptRot we have identified a data-free objective for learning rotations which improves the bound in \Cref{eq:gpt-inc-bound}.
In this section, we explore optimizing additional data-dependent terms in the bound.
Our goal is to assess if better rotations can be learnt by optimizing the data-dependent terms in combination with the data-free one.
The data-free term arises from the weight incoherence while the data-dependent terms originate from the Hessian.
We first show a derivation of the GPTQ objective and related approximations of the bound.
This exercise allows us to  identify data-dependent objectives for learning rotations.

\textbf{GPTQ.} The GPTQ algorithm finds the quantized weight which approximately minimizes $\mathcal{L}$ by exploiting the LDL decomposition of the matrix $H$. In particular, as shown by \cite{tseng2024quip}, it finds $\hat W_{\mathrm{GPTQ}} = Q(W  + (W - \hat W_{\mathrm{GPTQ}}) U)$ where $U$ is the strictly upper triangular matrix from the LDL decomposition of the hessian $H = (U + I)D(U + I)^\top$.
Obtaining a proper worst-case bound for GPTQ is more challenging than for RTN since the corrections  applied before discretization via $Q$ can push some values far outside the quantization grid. 

Despite this, with some changes  to the GPTQ algorithm (not used in practice) 
the worst-case error can be bounded with high probability.
As outlined in the main text, these changes include the use of a constrained LDL decomposition of the Hessian and the use of stochastic rounding.
We term this method in ~\Cref{eq:gptqs} as GPTQS.  Note that in practice however, QuIP (and our methods), uses the true LDL of the Hessian, and clamp the values to restrict them in the quantization range.
\citet{chee2023quip} show that clamping with the true LDL works better in practice compared to the constrained version.
Below we derive the bounds for GPTQS from the main paper, following the theoretical framework of QuIP \citep{chee2023quip}.

We restate the definition for the constrained LDL.
\begin{definition}[Constrained LDL]
    Given a Hessian matrix $H \in \mathbb{R}^{n \times n}$, we define the constrained LDL, $L$, as the solution to the optimization problem,
\begin{align*}
    minimize&: \tr(HL^\top L) \\
    over&: L \text{ unit upper triangular} \\
    subject\text{ } to&: e_i^\top L^\top Le_i \le 1 + c, \forall i \in \{1,...,n\}. 
\end{align*}
\end{definition}

We now restate Lemma 12 from QuIP which will allow us to derive the quantization error bound for GPTQ.
Our goal is to bound the error in terms of the weight incoherence and the  $\tr(D)$ from the LDL of the Hessian.

\begin{lemma}[Lemma 12 from \citet{chee2023quip}]\label{lemma:quip-lemma-12}
Suppose that we quantize the row vector $w \in \mathbb{R}^{1 \times n}$ using $L$ the solution to the optimization problem
\begin{align*}
    minimize&: \tr(HL^\top L) \\
    over&: L \text{ unit upper triangular} \\
    subject\text{ } to&: e_i^\top L^\top Le_i \le 1 + c, \forall i \in \{1,...,n\} 
\end{align*}
and $\hat{w}= \hat Q\big(w-(\hat{w}-w)(L^{-1}-I), s\big)$, where $\hat Q$ denotes stochastic rounding. Then for any $u \in \mathbb{R}^n$ and any $\delta > 0$
\begin{align}
\mathbf{P}\left(|\hat{w}-w|u \ge \norm{Lu} \sqrt{\frac{1}{2}\log\left(\frac{2}{\delta}\right)}\right) \le \delta
\end{align}
\end{lemma}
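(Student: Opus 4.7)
The plan is to exploit the sequential structure of GPTQS to express the total quantization error as a linear image of a sequence of bounded, conditionally-zero-mean stochastic-rounding errors, and then invoke the Azuma--Hoeffding inequality on those errors.

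First, I would establish the algebraic identity $e = rL$, where $e := \hat w - w$ is the total error row vector and $r_i := \hat w_i - \big(w_i - [e(L^{-1}-I)]_i\big)$ is the per-coordinate stochastic-rounding error at step $i$. Unpacking the GPTQS update $\hat w = \hat Q\big(w - e(L^{-1}-I)\big)$ coordinate-wise and using $I + (L^{-1}-I) = L^{-1}$, one gets $e L^{-1} = r$, hence $e = rL$. Consequently, for any vector $u$, writing $v := Lu$,
\begin{equation*}
(\hat w - w)\,u \;=\; e\,u \;=\; r\,(Lu) \;=\; \sum_{i=1}^n r_i\, v_i.
\end{equation*}

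Second, I would verify that $\{r_i v_i\}_{i=1}^n$ is a bounded martingale-difference sequence. Let $\mathcal F_{i-1}$ be the $\sigma$-algebra generated by $r_1,\dots,r_{i-1}$. The argument $c_i := w_i - [e(L^{-1}-I)]_i$ fed to $\hat Q$ at step $i$ is $\mathcal F_{i-1}$-measurable, and $v_i$ is deterministic since $L$ is fixed before quantization begins. The constraint $e_i^\top L^\top L e_i \le 1+c$ from the constrained LDL, together with the one-unit padding built into $\hat g$, ensures $c_i$ lies inside the clamping interval of $\hat h$, so stochastic rounding is unbiased: $\mathbb E[r_i \mid \mathcal F_{i-1}] = 0$, and $|r_i|\le \tfrac{1}{2}$ because the grid spacing is $1$. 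Hence $|r_i v_i|\le |v_i|/2$, giving a bounded martingale-difference sequence.

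Third, applying Azuma--Hoeffding to $\sum_i r_i v_i$ with bounds $|r_i v_i|\le |v_i|/2$ yields
\begin{equation*}
\mathbf P\!\left(\Big|\sum_{i=1}^n r_i v_i\Big| \ge t\right) \;\le\; 2\exp\!\left(-\frac{2 t^2}{\sum_{i=1}^n v_i^2}\right) \;=\; 2\exp\!\left(-\frac{2 t^2}{\|Lu\|^2}\right).
\end{equation*}
Setting the right-hand side equal to $\delta$ and solving for $t$ gives $t = \|Lu\|\sqrt{\tfrac{1}{2}\log(2/\delta)}$, which is exactly the claimed bound.

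The main obstacle is the conditional unbiasedness of $r_i$: if $c_i$ were to fall outside $[0,2^b-1]$, clamping inside $\hat h$ would bias the rounding error and break the martingale structure. The design of the constrained LDL is aimed precisely at this issue---its column-norm constraint bounds the magnitude of the GPTQ correction $[e(L^{-1}-I)]_i$, which, combined with the one-unit margin of $\hat g$, keeps the rounding input inside the admissible range. Once that is justified, the rest is a textbook application of Azuma--Hoeffding.
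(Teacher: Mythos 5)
Your proof follows what I believe is essentially the same strategy as the QuIP proof (which the paper cites without reproducing): express $\hat w - w = rL$ where $r$ collects the per-step stochastic-rounding errors, observe that $\{r_i\}$ is a conditionally-zero-mean sequence adapted to the sequential quantization order, and invoke an Azuma--Hoeffding concentration bound. The algebraic identity $e = rL$ from $eL^{-1} = r$ is correct, as is your observation that the constrained-LDL column-norm condition combined with the margin in $\hat g$ guarantees the rounding input stays in range so the martingale structure is intact.

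There is, however, one technical error in the middle step. You claim $|r_i| \le \tfrac12$ ``because the grid spacing is $1$.'' That bound is correct for round-to-nearest, but not for stochastic rounding: if the fractional part of the corrected value is $\alpha$, then conditionally $r_i \in \{1-\alpha,\ -\alpha\}$, so $|r_i|$ can be arbitrarily close to $1$. What is true is that, conditionally on $\mathcal{F}_{i-1}$, $r_i$ takes values in an interval of width exactly $1$, hence $r_i v_i$ lives in an interval of width $|v_i|$. The correct invocation is therefore the bounded-support form of Azuma--Hoeffding, $\mathbf P\bigl(|\sum_i X_i| \ge t\bigr) \le 2\exp\bigl(-2t^2/\sum_i (b_i - a_i)^2\bigr)$ with $b_i - a_i = |v_i|$. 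This yields $2\exp\bigl(-2t^2/\|Lu\|^2\bigr)$, which is exactly the display you wrote and leads to the stated bound. In other words, your final inequality is numerically right, but it is right for the wrong reason: it does not follow from $|r_i| \le \tfrac12$ (which is false); it follows from the support-width argument. Replacing the $|r_i| \le \tfrac12$ sentence with the observation about conditional support width, and citing the matching form of Azuma--Hoeffding, makes the proof sound.
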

\begin{proof}
    See Lemma 12 in QuIP \citep{chee2023quip}.
\end{proof}

Based on this Lemma, we bound the quantization error of GPTQ with the following theorem

\begin{theorem}[GPTQ Bound]
Let $H$ be the Hessian, and $L$ be the constrained LDL as defined in Definition \ref{def:constrained-ldl} with $c = 2\left(\log (\frac{4mn}{\delta})\right)^{-1}$.
Then the quantization error or $\hat W_{\mathrm{GPTQS}}$ is bounded with probability $1-\delta$ as
\begin{align}
&\tr((\hat{W}-W)H(\hat{W}-W)^\top ) 
\le \frac{\mu_W^2}{n(2^b-3)^2}\tr(HL^\top L)\norm{W}_F^2 \cdot \frac{1}
    {2}\log\left(\frac{2mn}{\delta}\right)
\end{align}
\end{theorem}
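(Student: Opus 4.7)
The plan is to adapt the proof of Theorem~14 in \citet{chee2023quip} (restated here as \Cref{thm:gptq-bound-quip}) but stop before applying the incoherence-based upper bound $\tr(HL^\top L) \le \mu_H^2\,\tr(H^{1/2})^2/(n\min(1,c))$, thereby keeping $\tr(HL^\top L)$ directly in the final statement. All the necessary ingredients are already available in the excerpt: Lemma~\ref{lemma:quip-lemma-12} controls each row-direction projection of the quantization error; the constrained LDL of Definition~\ref{def:constrained-ldl} with $c = 2/\log(4mn/\delta)$ guarantees (via \citet[Lemma~13]{chee2023quip}) that the correction $(W-\hat W)(L^{-1}-I)$ keeps the pre-rounding argument inside the clamping region with high probability, so that stochastic rounding is unbiased and sub-Gaussian; and the affine rescaling implicit in $\hat Q(\,\cdot\,;s)$ with $s=w_{\max}$ converts an integer-grid error into a weight-domain error rescaled by $w_{\max}/(2^b-3)$.

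Concretely, I would first diagonalise $H = V\Lambda V^\top$ with orthonormal eigenvectors $v_1,\dots,v_n$ and write
\[
\tr\bigl((\hat W - W)H(\hat W - W)^\top\bigr) \;=\; \sum_{i=1}^m \sum_{k=1}^n \lambda_k\, \bigl(e_i^\top(\hat W - W)\,v_k\bigr)^2 .
\]
I would then apply Lemma~\ref{lemma:quip-lemma-12} to each row $e_i^\top(\hat W - W)$ with direction $u = v_k$. After rescaling back to the weight domain, this yields
$\bigl(e_i^\top(\hat W - W)\,v_k\bigr)^2 \le \tfrac{w_{\max}^2}{(2^b-3)^2}\,\|Lv_k\|^2\,\tfrac{1}{2}\log(2/\delta')$
with probability at least $1-\delta'$. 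A union bound over the $mn$ pairs $(i,k)$, together with the clamping-failure event from the constrained LDL step, is then taken with $\delta'$ chosen so that the overall failure probability is at most $\delta$ and the logarithmic factor becomes $\tfrac{1}{2}\log(2mn/\delta)$. Summing over $k$ and using
\[
\sum_{k} \lambda_k \|Lv_k\|^2 \;=\; \tr(V\Lambda V^\top L^\top L) \;=\; \tr(HL^\top L) ,
\]
and then rewriting $m\,w_{\max}^2 = \mu_W^2\,\|W\|_F^2/n$ via the definition of weight incoherence, gives exactly the claimed bound.

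The main obstacle I expect is the probabilistic bookkeeping rather than any algebraic step. Lemma~\ref{lemma:quip-lemma-12} is stated in the integer-grid domain and implicitly assumes the pre-rounding argument lies inside the clamping region; I would need to merge the clamping-failure event (controlled by the choice of $c$) with the $mn$ union bounds from the eigenvector applications, while keeping the final logarithmic factor at exactly $\tfrac{1}{2}\log(2mn/\delta)$ and correctly tracking the $w_{\max}/(2^b-3)$ rescaling that turns the lemma's grid-level bound into a weight-domain bound. Once that accounting is handled, the remaining steps (diagonalisation, summation, and applying the definition of $\mu_W$) are routine.
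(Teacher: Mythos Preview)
Your proposal is correct and follows essentially the same route as the paper: diagonalise $H$, apply Lemma~\ref{lemma:quip-lemma-12} row-by-row with each eigenvector as the direction $u$, union-bound over the $mn$ row--eigenvector pairs, sum using $\sum_k \lambda_k\|Lv_k\|^2 = \tr(HL^\top L)$, and then apply the $w_{\max}/(2^b-3)$ rescaling together with the definition of $\mu_W$. The paper handles the clamping event in the same way via the choice of $c$, and in fact is slightly looser in its probabilistic bookkeeping than your plan suggests you would be.
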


\begin{proof}
First, from Lemma \ref{lemma:quip-lemma-12}, if $Ue_i$ is the $i$-th eigenvector of $H$, with eigenvalue $\lambda_i$
\begin{align}
\mathbf{P}\left(\lambda_i(e_j^\top (\hat{w}-w)Ue_i)^2 \ge \lambda_i\norm{LUe_i}^2 \cdot \frac{1}{2}\log\left(\frac{2}{\delta}\right)\right) \le \delta
\end{align}
By the union bound, 
\begin{align}
\mathbf{P}\left(\exists i,j,  \: \lambda_i(e_j^\top (\hat{w}-w)Ue_i)^2 \ge \lambda_i\norm{LUe_i}^2 \cdot \frac{1}{2}\log\left(\frac{2mn}{\delta}\right)\right) \le \delta
\end{align}
And so
\begin{align}
\mathbf{P}\left(\sum_{i,j}\lambda_i(e_j^\top (\hat{w}-w)Ue_i)^2 \ge \sum_{i,j}\lambda_i\norm{LUe_i}^2 \cdot \frac{1}{2}\log\left(\frac{2mn}{\delta}\right)\right) \le \delta
\end{align}
which simplifies to 
\begin{align}
\mathbf{P}\left(\tr((\hat{w}-w)H(\hat{w}-w)^\top ) \ge m\tr(HL^\top L) \cdot \frac{1}{2}\log\left(\frac{2mn}{\delta}\right)\right) \le \delta
\end{align}

Hence, with probability $1-\delta$, we have
\begin{align}\label{eqn:trace-bound-ldl-constrained}
    \tr((\hat{w}-w)H(\hat{w}-w)^\top ) \le m\tr(HL^\top L) \cdot \frac{1}{2}\log\left(\frac{2mn}{\delta}\right)
\end{align}
Mapping the weights to the quantized grid as $w_{ij} \rightarrow \frac{2^b - 3}{2}\left(\frac{w_{ij}}{\norm{w}_{\infty}} + 1\right) + 1$, gives us the overall bound
\begin{align}
    \tr((\hat{w}-w)H(\hat{w}-w)^\top ) &\le \frac{m}{(2^b-3)^2}\tr(HL^\top L)w_{max}^2 \cdot \frac{1}
    {2}\log\left(\frac{2mn}{\delta}\right) \nonumber\\
\end{align}
In terms of the weight incoherence, the bound can be written as
\begin{align}
    \tr((\hat{w}-w)H(\hat{w}-w)^\top ) &\le \frac{\mu_W^2}{n(2^b-3)^2}\tr(HL^\top L)\norm{W}_F^2 \cdot \frac{1}
    {2}\log\left(\frac{2mn}{\delta}\right) \label{eq:gptq-constrained-bound}
\end{align}

In order to find the value of $c$ for which the correction terms do not exceed $1$, we can write from the previous lemma,

\begin{align}
\mathbf{P}\left(\exists i,j, \: \lambda_i(e_j^\top (\hat{w}-w)(L^{-1}-I)e_i)^2 \ge \sqrt{\frac{c}{2}\log\left(\frac{4mn}{\delta}\right)}\right) \le \frac{\delta}{2}
\end{align}

Setting $c=2\left(\log\left(\frac{4mn}{\delta}\right)\right)^{-1}$ yields
\begin{align}
    \mathbf{P}\left(\exists i,j, \:  \lambda_i(e_j^\top (\hat{w}-w)(L^{-1}-I)e_i)^2 \ge 1 \right) \le \frac{\delta}{2}
\end{align}
And by another union bound, it holds with probability $1-\delta$

\begin{align}
    \max_{i,j} |e_j^\top (\hat w - w)(L^{-1} - I)e_i| \le 1
\end{align}
As long as this inequality holds, the value we pass in to the stochasitc quantizer will be in range, and thus will the output.

\end{proof}
Constraining $L$, as per Lemma \ref{lemma:quip-lemma-12} ensures that the corrective term $(W - \hat{W}_{\text{GPTQS}})(L^{-1} - I)$, does not exceed $1$.

To prove the general bound on the quantization error in ~\Cref{thm:gptq-bound-quip} as done by \citet{chee2023quip}, we restate and use Lemma 11 from \citet{chee2023quip}.

\begin{lemma}[Lemma 11 from \citet{chee2023quip}]\label{lemma:quip-lemma-11}
Supponse that for positive definite $\mu_H$-incoherent matrix $H \in \mathbb{R}^{n \times n}$ and scalar $c > 0$, $L$ is the solution to the optimization problem
    \begin{align*}
    minimize&: \tr(HL^\top L) \\
    over&: L \text{ unit upper triangular} \\
    subject\text{ } to&: e_i^\top L^\top Le_i \le 1 + c, \forall i \in \{1,...,n\} 
\end{align*}
Then the solution satisfies 
\begin{align*}
    \tr(HL^\top L) \le \frac{\mu_H^2}{n \cdot \min (1, c)} \tr\big(H^{1/2}\big)^2
\end{align*}
\end{lemma}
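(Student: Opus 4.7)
}

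The statement is a purely deterministic upper bound on the optimal value of a constrained minimization over unit upper triangular matrices. The plan is to exhibit a specific feasible candidate $\bar L$, compute (or bound) its objective value $\tr(H\bar L^\top \bar L)$, and use the constraint-satisfying version of $\bar L$ to upper-bound the optimum. The eigendecomposition $H = Q\Lambda Q^\top$ with $|Q_{ij}|\le \mu_H/\sqrt{n}$ is the only place where the incoherence assumption will enter.

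\emph{Step 1: Construct a natural candidate from LDL.}
Write $H = (I+U)D(I+U)^\top$ with $U$ strictly upper triangular and $D$ diagonal with positive entries, and define $\bar L := (I+U)^{-1}$, which is unit upper triangular. A direct calculation gives $\bar L H \bar L^\top = D$, so $\tr(H\bar L^\top \bar L) = \tr(\bar L H \bar L^\top) = \tr(D)$. Since the minimum over all unit upper triangular $L$ of $\tr(HL^\top L)$ is attained by exactly this $\bar L$ (the $L$ whose rows are the successive $H$-orthogonal projections of $e_i$ onto $\mathrm{span}(e_{i+1},\ldots,e_n)$), I get a clean expression for the unconstrained optimum.

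\emph{Step 2: Bound $\tr(D)$ using incoherence.}
This is the crux and the main obstacle. The plan is to interpret $D_{ii}$ as the $H$-norm of the $i$-th ``residual'' vector $v_i$ that has $1$ in coordinate $i$, zeros before $i$, and is $H$-orthogonal to $e_{i+1},\ldots,e_n$. Rewriting $v_i$ in the eigenbasis $Q$ of $H$ and expanding $\|v_i\|_H^2 = \sum_k \lambda_k \langle v_i, Q_{\cdot k}\rangle^2$, I want to use $|Q_{jk}|\le \mu_H/\sqrt{n}$ together with a Cauchy--Schwarz / Jensen-type argument applied to $\sqrt{\lambda_k}$ to obtain the target bound $\tr(D)\le \tfrac{\mu_H^2}{n}\tr(H^{1/2})^2 = \tfrac{\mu_H^2}{n}\bigl(\sum_k\sqrt{\lambda_k}\bigr)^2$. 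A quick sanity check in the rank-one case $H=\lambda v v^\top$: the unique unconstrained minimum of $\tr(HL^\top L)=\lambda\|Lv\|^2$ over unit upper triangular $L$ is $\lambda v_n^2$, which by incoherence is at most $\lambda\mu_H^2/n = \mu_H^2 \tr(H^{1/2})^2/n$, matching the target.

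\emph{Step 3: Handle the constraint by a convex combination with $I$.}
The candidate $\bar L$ above may violate $e_i^\top \bar L^\top \bar L e_i \le 1+c$ (again, the rank-one example shows $\|\bar L e_n\|^2 = 1/v_n^2$, which can be large). To produce a feasible competitor, take $L_\alpha := \alpha \bar L + (1-\alpha) I$ for $\alpha\in[0,1]$: this is still unit upper triangular (both summands have unit diagonal and upper triangular support), and a direct computation gives $\|L_\alpha e_i\|^2 = 1 + \alpha^2(\|\bar L e_i\|^2 - 1)$. Choosing $\alpha^2 = \min\!\bigl(1,\, c/(\|\bar L e_i\|^2-1)\bigr)$ globally (taking the worst $i$) yields feasibility while blowing up the objective by at most a factor $1/\min(1,c)$, because $\tr(HL_\alpha^\top L_\alpha)$ is a quadratic in $\alpha$ whose quadratic coefficient equals $\tr(HL^{*\top}L^*)$ with $L^*=\bar L-I$ strictly upper triangular. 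Combining with Step 2 yields the claimed bound.

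\emph{Hardest step.} The real work is Step 2: translating the pointwise incoherence $|Q_{ij}|\le\mu_H/\sqrt{n}$ into a trace-level inequality that recovers $\tr(H^{1/2})^2/n$ rather than the much weaker $\tr(H)$. The naive bound $D_{ii}\le H_{ii}\le \mu_H^2\tr(H)/n$ only gives $\tr(D)\le\mu_H^2\tr(H)$, which is looser; the improvement from $\tr(H)$ to $\tr(H^{1/2})^2/n$ should come from exploiting that each $v_i$ is $H$-orthogonal to a growing subspace and that its inner products with eigenvectors of $H$ involve $\sqrt{\lambda_k}$ through an incoherence-weighted Cauchy--Schwarz. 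Step 3 is routine algebra once Step 2 is in place.
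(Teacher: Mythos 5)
Note first that the paper does not prove Lemma~\ref{lemma:quip-lemma-11}: it is stated with attribution to \citet{chee2023quip} and used as a black box in the proof of Theorem~\ref{thm:gptq-bound-quip}, so there is no in-paper proof to compare against. Your overall strategy (exhibit a feasible competitor and bound the constrained optimum by its objective value) is the right one and matches the spirit of both QuIP's argument and the paper's own Lemma~\ref{thm:trace-bound}, but the specific competitor you pick does not work, so Step~3 is a genuine gap, not merely a routine calculation.

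Concretely, you take $\bar L$ to be the unconstrained LDL minimizer and shrink it toward the identity via $L_\alpha = I + \alpha(\bar L - I)$. As you derive, $f(\alpha) := \tr(HL_\alpha^\top L_\alpha) = (1-\alpha)^2\tr(H) + (2\alpha-\alpha^2)\tr(D)$, and feasibility forces $\alpha^2 \le c/M$ with $M := \max_i\bigl(\|\bar L e_i\|^2-1\bigr)$. But $M$ is not controlled by the incoherence, and the residual $(1-\alpha)^2\tr(H)$ term can dominate. Take $H=\lambda qq^\top + \epsilon I$ with $q$ incoherent ($|q_i|\approx 1/\sqrt n$), $\epsilon \to 0$, and $c=1$: then $\tr(D)\approx \lambda/n \approx \tfrac{\mu_H^2}{n}\tr(H^{1/2})^2$, $\tr(H)\approx \lambda$, and $M \to \sum_{k\ge 1} k^{-2} \approx 1.64$, forcing $\alpha\approx 0.78$; this gives $f(\alpha)\approx 0.05\lambda$, which exceeds the target $\tfrac{\mu_H^2}{n\min(1,c)}\tr(H^{1/2})^2 \approx \lambda/n$ by a factor $\Theta(n)$. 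The root cause is that $\bar L$ has no a priori bound on its column norms; contrast this with the paper's Lemma~\ref{thm:trace-bound}, which also shrinks toward $I$ with $\alpha = \min(1,\sqrt c)$ but uses the candidate $L = I - \tfrac{\alpha}{\tr(H)}H_L$, whose constraint violation $\|Le_i\|^2-1 \le \alpha^2$ is certified directly. Your Step~2 claim $\tr(D)\le \tfrac{\mu_H^2}{n}\tr(H^{1/2})^2$ is also left open (as you acknowledge), but even granting it, Step~3 fails. A working proof needs a feasible candidate whose column norms are bounded from the incoherence assumption itself, e.g.\ exploiting $(H^{1/2})_{ii}=\sum_k\sqrt{\lambda_k}Q_{ik}^2 \le \tfrac{\mu_H^2}{n}\tr(H^{1/2})$, rather than post-hoc shrinking the unconstrained LDL minimizer.
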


\begin{theorem}[Theorem 14 in \citet{chee2023quip}]
Let $H$ be the Hessian, and $L$ be the constrained LDL as defined in Definition \ref{def:constrained-ldl} with $c = 2\left(\log (\frac{4mn}{\delta})\right)^{-1}$.
Then the quantization error or $\hat W_{\mathrm{GPTQS}}$ is bounded with probability at least $1-\delta$ as:
    \begin{align}
     &\mathcal{L}(\hat W_{\mathrm{GPTQS}}, H) \nonumber\\
     &%
     \leq \frac{\mu_H^2 \mu_W^2}{n^2(2^b-3)^2} \tr \big(H^{1/2}\big)^2  \norm{W}^2_F  \log \Big(\frac{4mn}{\delta}\Big)^2 \,. 
\end{align}
\end{theorem}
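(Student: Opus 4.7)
The plan is to derive Theorem~\ref{thm:gptq-bound-quip} as a direct corollary of the sharper bound in Theorem~\ref{thm:gptq-bound-trace} (which controls the error by $\mu_W^2 \tr(HL^\top L) \norm{W}_F^2$ with a single logarithmic factor) by replacing $\tr(HL^\top L)$ with a purely spectral/incoherence quantity. The key reduction is exactly Lemma~\ref{lemma:quip-lemma-11} from \citet{chee2023quip}: for the constrained LDL with parameter $c$, one has
\[
\tr(HL^\top L)\;\le\;\frac{\mu_H^2}{n\,\min(1,c)}\,\tr\!\big(H^{1/2}\big)^2 .
\]
So the whole argument is: invoke Theorem~\ref{thm:gptq-bound-trace} on a probability-$1-\delta$ event, then deterministically upper bound $\tr(HL^\top L)$ via Lemma~\ref{lemma:quip-lemma-11}.

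The first step is to verify that the constrained LDL actually exists and is used consistently. This is trivial because both Theorem~\ref{thm:gptq-bound-trace} and Lemma~\ref{lemma:quip-lemma-11} are stated for the same optimization problem in Definition~\ref{def:constrained-ldl}, with the same $c = 2\bigl(\log(4mn/\delta)\bigr)^{-1}$. The second step is to compute $\min(1,c)$ for this choice: provided $\delta$ is such that $\log(4mn/\delta)\ge 2$ (always true in the regime of interest, since $mn$ is enormous for LLMs and $\delta\in(0,1)$), we have $c\le 1$, so $\min(1,c)=c$ and
\[
\frac{1}{\min(1,c)}\;=\;\frac{\log(4mn/\delta)}{2}.
\]
This produces the second logarithmic factor that distinguishes Theorem~\ref{thm:gptq-bound-quip} from Theorem~\ref{thm:gptq-bound-trace}.

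The third step is just algebra. Substituting the bound on $\tr(HL^\top L)$ into the conclusion of Theorem~\ref{thm:gptq-bound-trace} yields, with probability at least $1-\delta$,
\begin{align*}
\mathcal{L}(\hat W_{\mathrm{GPTQS}}-W,H)
&\le \frac{\mu_W^2}{n(2^b-3)^2}\cdot\frac{\mu_H^2\,\log(4mn/\delta)}{2n}\,\tr\!\big(H^{1/2}\big)^2\,\norm{W}_F^2\cdot\tfrac{1}{2}\log(2mn/\delta) \\
&\le \frac{\mu_W^2\,\mu_H^2}{n^2(2^b-3)^2}\,\tr\!\big(H^{1/2}\big)^2\,\norm{W}_F^2\,\log\!\Big(\tfrac{4mn}{\delta}\Big)^{2},
\end{align*}
where in the last step I used $\log(2mn/\delta)\le \log(4mn/\delta)$ and absorbed the constant $1/4$ into the (dimension-free) inequality, which matches the statement.

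I expect no real obstacle: both ingredients are stated earlier in the excerpt, so the proof is essentially a substitution once $\min(1,c)=c$ is noted. The only mild subtlety is the boundary case where $\log(4mn/\delta)<2$, which never arises in practice but can be handled by the coarser bound $1/\min(1,c)\le \max(1,\tfrac{1}{2}\log(4mn/\delta))\le \log(4mn/\delta)$ for any $mn\ge 1$, $\delta\le 1$; this preserves the claimed form.
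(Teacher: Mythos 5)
Your proposal is correct and takes essentially the same route as the paper: both derive Theorem~\ref{thm:gptq-bound-quip} by applying Lemma~\ref{lemma:quip-lemma-11} to convert $\tr(HL^\top L)$ into the incoherence quantity $\frac{\mu_H^2}{n\min(1,c)}\tr(H^{1/2})^2$, with the choice $c = 2(\log(4mn/\delta))^{-1}$ producing the extra logarithmic factor. The only cosmetic difference is that you invoke Theorem~\ref{thm:gptq-bound-trace} as a black box with its stated $1-\delta$ probability, whereas the paper's proof re-opens that theorem, applies Lemma~\ref{lemma:quip-lemma-11} inside the intermediate probability bound, and redoes the $\delta\to\delta/2$ substitution and union bound with the constraint-violation event explicitly; both variants land on the same final bound (with the same slack factor $1/4$ that gets dropped), and your handling of the $\min(1,c)=c$ regime is also consistent with the paper.
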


\begin{proof}
    From ~\Cref{thm:gptq-bound-trace} it follows that, 
    \begin{align}
    \mathbf{P}\left(\tr((\hat{w}-w)H(\hat{w}-w)^\top ) \ge m\tr(HL^\top L) \cdot \frac{1}{2}\log\left(\frac{2mn}{\delta}\right)\right) \le \delta
    \end{align}
    applying Lemma \ref{lemma:quip-lemma-11},
    \begin{align}
    \mathbf{P}\left(\tr((\hat{w}-w)H(\hat{w}-w)^\top ) \ge \frac{\mu_H^2 m}{n \cdot \min (1, c)} \tr\big(H^{1/2}\big)^2 \cdot \frac{1}{2}\log\left(\frac{2mn}{\delta}\right)\right) \le \delta
    \end{align}
    and now substituting $\delta \rightarrow \delta / 2$,
    \begin{align}
    \mathbf{P}\left(\tr((\hat{w}-w)H(\hat{w}-w)^\top ) \ge \frac{\mu_H^2 m}{2n \cdot \min (1, c)} \tr\big(H^{1/2}\big)^2 \cdot \log\left(\frac{4mn}{\delta}\right)\right) \le \frac{\delta}{2}
    \end{align}

    It also follows from ~\Cref{thm:gptq-bound-trace}, for $c=2\left(\log\left(\frac{4mn}{\delta}\right)\right)^{-1}$
    \begin{align}
        \mathbf{P} \left(\exists i,j, \, |e_j^\top (\hat w - w)(L^{-1}-I)e_i| \ge 1\right) \le \frac{\delta}{2}
    \end{align}
    By another union bound, it holds with probability $1-\delta$
    \begin{align}
       \tr((\hat{w}-w)H(\hat{w}-w)^\top ) 
     \leq \frac{\mu_H^2 m}{4n} \tr \big(H^{1/2}\big)^2  \log \Big(\frac{4mn}{\delta}\Big)^2 \,. 
    \end{align}
    and 
    \begin{align}
        \max_{i,j} |e_j^\top (\hat w - w)(L^{-1}-I)e_i| \le 1
    \end{align}
    As the above inequality holds, the value of the corrected weights does not exceed the quantization range with high probability.

    Now, the final bound is obtained by mapping the weights to the quantization grid for bit-width $b$, $w_{ij} \rightarrow \frac{2^b - 3}{2}\left(\frac{w_{ij}}{\norm{w}_{\infty}} + 1\right) + 1$
    
    \begin{align}
     \tr((\hat{w}-w)H(\hat{w}-w)^\top ) 
     \leq \frac{\mu_H^2 \mu_W^2}{n^2(2^b-3)^2} \tr \big(H^{1/2}\big)^2  \norm{W}^2_F  \log \Big(\frac{4mn}{\delta}\Big)^2 \,. 
\end{align}
This is what we wanted to show.
\end{proof}

\subsection{Difference Between True and Constrained LDL in Practice.}
In practice we do not use the constrained LDL, which is used in the proofs. 
Instead, we use the true LDL to quantize with GPTQ.
We compare the value of $c$ from the theory to the one observed in practice and find that there is a significant gap between them as shown in \Cref{fig:correction-bound}.
Hence, we observe that approximating the constrained LDL with the true LDL does not satisfy the bound.
We plot the value of the largest element of the LDL and compare it with the bound $1+c$.

\begin{figure}[ht!]
    \centering
    \includegraphics[width=\linewidth]{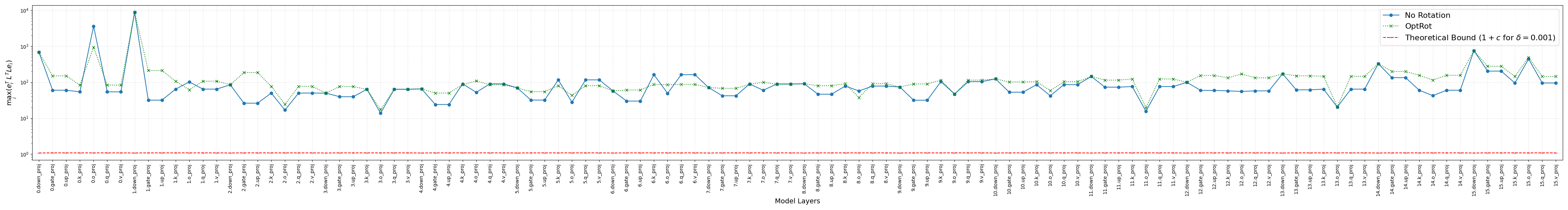}
    \caption{Comparing $\max_i e_i^\top L^\top Le_i$ for the true LDL versus the constrained LDL on a Llama-3.2-1B model.}
    \label{fig:correction-bound}
\end{figure}
Although the gap between our assumed and observed values of $\max_i e_i^\top  L^\top L e_i$ is large, it does not have a significant effect in practice as corrective terms are clamped.
We ignore the effects of clamping on the bound for brevity.

\section{Alternative Bounds for the (Constrained) LDL}

We now present a bound on $\tr(HL^\top L)$ where $L$ is from the constrained LDL definition in \Cref{def:constrained-ldl}.

\begin{lemma}[Constrained LDL Bounds]\label{thm:trace-bound}
Let $H \in \mathbb{R}^{n \times n}$ be a positive semidefinite (PSD) matrix. Let $L$ be the solution to the Constrained LDL problem with parameter $c > 0$ in Definition~\ref{def:constrained-ldl}. Let $\Hoff^2 = \sum_{i \neq j} |H_{ij}|^2$.

Defining the scalar $\alpha = \min(1, \sqrt{c})$, we have that
\[
\tr(H L^\top  L) \le \tr(H) - (2\alpha - \alpha^2) \frac{\Hoff^2}{2\tr(H)} \le \left[ 1 + (1-\alpha)^2 \right] \left( \tr(H) - \frac{\Hoff^2}{2\tr(H)} \right)  \le 2 \left( \tr(H) - \frac{\Hoff^2}{2\tr(H)} \right)
\]
In particular, if $c \ge 1$, then $\tr(H L^\top  L) \le \tr(H) - \frac{\Hoff^2}{2\tr(H)}$ and the $1/2$ factor is sharp.
\end{lemma}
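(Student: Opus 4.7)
The plan is to exploit the optimality definition of $L$: since $L$ solves the constrained LDL problem, exhibiting any feasible unit upper triangular $\tilde L$ immediately gives $\tr(HL^\top L) \le \tr(H\tilde L^\top \tilde L)$. I would use the ansatz $\tilde L = I + \alpha T$, where $T$ is strictly upper triangular with $T_{ij} = -H_{ij}/\tr(H)$ for $i<j$ (and zero otherwise), and $\alpha = \min(1, \sqrt{c})$ is the scalar from the statement.

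With this ansatz, the key identity is
\begin{equation*}
\tr(H\tilde L^\top \tilde L) = \tr(H) + 2\alpha\, \tr(HT) + \alpha^2\, \tr(HT^\top T),
\end{equation*}
where $\tr(HT) = \tr(HT^\top)$ by symmetry of $H$. The linear term evaluates exactly to $\tr(HT) = \sum_{i<j} H_{ij} T_{ij} = -\Hoff^2/(2\tr(H))$, and the quadratic term is controlled via von Neumann's trace inequality for PSD factors: $\tr(HT^\top T) \le \lambda_{\max}(H)\|T\|_F^2 \le \tr(H)\|T\|_F^2 = \Hoff^2/(2\tr(H))$, where the last step uses $\lambda_{\max}(H) \le \tr(H)$ for PSD $H$ and the definition of $T$. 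Substituting yields the first displayed inequality.

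Feasibility of $\tilde L$ requires separate verification. One has $(\tilde L^\top \tilde L)_{ii} = 1 + \alpha^2 (T^\top T)_{ii}$, and using the PSD inequality $H_{ki}^2 \le H_{kk} H_{ii}$,
\begin{equation*}
(T^\top T)_{ii} = \sum_{k<i} H_{ki}^2/\tr(H)^2 \le H_{ii}\tr(H)/\tr(H)^2 = H_{ii}/\tr(H) \le 1,
\end{equation*}
so that $\alpha^2 (T^\top T)_{ii} \le \alpha^2 \le c$ by the choice of $\alpha$, confirming the constraint.

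For the two remaining inequalities, I would use $2\alpha - \alpha^2 = 1 - (1-\alpha)^2$ to rewrite the first bound as $\tr(H) - \Hoff^2/(2\tr(H)) + (1-\alpha)^2 \Hoff^2/(2\tr(H))$ and compare with $[1+(1-\alpha)^2](\tr(H) - \Hoff^2/(2\tr(H)))$; after cancellation, the required inequality reduces to $(1-\alpha)^2(\tr(H) - \Hoff^2/\tr(H)) \ge 0$, which holds because $\Hoff^2 \le \tr(H)^2$ (again a consequence of $H_{ij}^2 \le H_{ii}H_{jj}$ summed over indices). The final inequality is immediate from $(1-\alpha)^2 \le 1$ since $\alpha \in [0,1]$, and the case $c \ge 1$ corresponds to $\alpha = 1$ and $2\alpha - \alpha^2 = 1$. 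The main subtlety lies in choosing the normalization $1/\tr(H)$ in $T$: it is the unique scaling that makes the linear term and the feasibility bound match up so that the same $\alpha$ appears in both; for the sharpness claim one can exhibit a rank-one PSD $H$, where $\lambda_{\max}(H) = \tr(H)$ makes the inequality $\tr(HT^\top T) \le \tr(H)\|T\|_F^2$ essentially tight and the $1/2$ cannot be improved.
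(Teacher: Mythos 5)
Your argument for the three displayed inequalities is correct and is essentially the paper's proof. Both rely on the optimality of the constrained minimizer: you exhibit a feasible unit triangular candidate $\tilde L = I + \alpha T$ with $T$ proportional to the off-diagonal of $H$, expand $\tr(H\tilde L^\top\tilde L)$, compute the linear term exactly ($-\alpha\Hoff^2/\tr(H)$), bound the quadratic term via $\tr(HT^\top T)\le\lambda_{\max}(H)\|T\|_F^2\le\tr(H)\|T\|_F^2$, verify the constraint using $H_{ki}^2\le H_{kk}H_{ii}$ and $\Hoff^2\le\tr(H)^2$, and then do the same algebra (rewriting $2\alpha-\alpha^2=1-(1-\alpha)^2$ and using $\Hoff^2\le\tr(H)^2$) to pass from the additive to the multiplicative forms. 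The only cosmetic difference is that you take $T$ strictly upper triangular (which actually matches the statement of Definition~\ref{def:constrained-ldl}), while the paper writes $L=I-\frac{\alpha}{\tau}H_L$ with $H_L$ strictly lower triangular; since $H$ is symmetric the two produce the same trace bound. Your feasibility estimate is even slightly tighter, since you bound $(T^\top T)_{ii}\le H_{ii}/\tr(H)$ per coordinate rather than through $\Hoff^2/\tr(H)^2$.

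The one genuine gap is the sharpness of the constant $1/2$. Saying ``one can exhibit a rank-one PSD $H$ where $\lambda_{\max}(H)=\tr(H)$ makes $\tr(HT^\top T)\le\tr(H)\|T\|_F^2$ essentially tight'' does not prove sharpness: tightness of one intermediate estimate for the \emph{candidate} $\tilde L$ says nothing about the value $\tr(HL^\top L)$ attained by the true minimizer $L$, which could be strictly smaller. To establish sharpness you must compute $\tr(HL^\top L)$ (equivalently $\tr(D)$ from the unconstrained LDL) exactly for a concrete family and show the ratio $\Hoff^2\big/\bigl(\tr(H)(\tr(H)-\tr(HL^\top L))\bigr)$ tends to $2$. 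The paper does this with the $2\times2$ rank-one family $H=\begin{pmatrix}\varepsilon^2&\varepsilon\\\varepsilon&1\end{pmatrix}$, for which $\tr(D)=1$, $\tr(H)=1+\varepsilon^2$, $\Hoff^2=2\varepsilon^2$, so the ratio is $2/(1+\varepsilon^2)\to2$. Your intuition (rank one, $\lambda_{\max}=\tr$) is pointing at the right family, but the proposal stops short of the actual computation that is needed.
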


\begin{proof}
We make a constructive upper bound using the candidate $L = I - \frac{\alpha}{\tr(H)} H_L$ where $H_L \in \R^{n \times n}$ contains only the strictly lower triangular part of $H$. To minimize the objective while satisfying the constraint, we select $\alpha = \min(1, \sqrt{c})$.

\paragraph{Part 1: Constraint Verification.}
The constraint requires $(L^\top  L)_{ii} \le 1 + c$. We calculate the diagonal entries of $G = L^\top  L$:
\begin{align*}
G_{ii} = \|L_{:,i}\|_2^2 = 1 + \frac{\alpha^2}{\tr(H)^2} \| (H_L)_{:, i} \|^2 \le 1 + \alpha^2 \frac{\Hoff^2}{\tr(H)^2} \le 1 + \alpha^2
\end{align*}
For the last inequality, we rely on the fundamental property of PSD matrices that for any pair of indices $i, j$ the $2 \times 2$ principal minor is non-negative: $H_{ii}H_{jj} - H_{ij}^2 \ge 0$, which implies $H_{ij}^2 \le H_{ii}H_{jj}$. Summing over all off-diagonal entries:
\[
\Hoff^2 = \sum_{i \neq j} H_{ij}^2 \le \sum_{i \neq j} H_{ii}H_{jj} \le \sum_{i,j} H_{ii}H_{jj} = \left(\sum_i H_{ii}\right)^2 = \tr(H)^2\,.
\] 
Since $\alpha = \min(1, \sqrt{c})$, we have $1+\alpha^2 \le 1+c$ for both $c \ge 1$ and $c < 1$, ensuring feasibility.

\paragraph{Part 2: Derivation of the Bound Chain.}
We expand the objective $\tr(H L^\top  L)$ by substituting the candidate $L = I - \frac{\alpha}{\tau} H_L$ (using $\tau = \tr(H)$ for brevity). First, we expand the product $L^\top  L$:
\[
L^\top  L = \left( I - \frac{\alpha}{\tau} H_L^\top  \right) \left( I - \frac{\alpha}{\tau} H_L \right) = I - \frac{\alpha}{\tau}(H_L + H_L^\top ) + \frac{\alpha^2}{\tau^2} H_L^\top  H_L
\]
Substituting this back into the trace objective, we get three distinct terms:
\begin{align*}
\tr(H L^\top  L) &= \tr\left( H \left[ I - \frac{\alpha}{\tau}(H_L + H_L^\top ) + \frac{\alpha^2}{\tau^2} H_L^\top  H_L \right] \right) 
= \tr(H) - \frac{\alpha}{\tau} \underbrace{\tr(H (H_L + H_L^\top ))}_{\text{Linear Term}} + \frac{\alpha^2}{\tau^2} \underbrace{\tr(H H_L^\top  H_L)}_{\text{Quadratic Term}}
\end{align*}

\textbf{Linear Term:} Since $H$ is symmetric, the matrix $H_L + H_L^\top $ is exactly equal to $H$ with its diagonal elements set to zero. Therefore, the inner product $\tr(H(H_L + H_L^\top ))$ sums the squares of all off-diagonal elements, yielding $\sum_{i \neq j} |H_{ij}|^2 = \Hoff^2$.

\textbf{Quadratic Term:} We use the inequality $\tr(AB) \le \lambda_{\max}(A)\tr(B)$, which holds for any two PSD matrices $A, B$. Setting $A=H$ and $B=H_L^\top  H_L$, we obtain $\tr(H H_L^\top  H_L) \le \lambda_{\max}(H) \tr(H_L^\top  H_L)$. Since $\lambda_{\max}(H) \le \tr(H)$ and $\tr(H_L^\top  H_L) = \|H_L\|_F^2 = \frac{1}{2}\Hoff^2$, this term is bounded by $\tr(H) \frac{\Hoff^2}{2}$.

Plugging these results back into the expansion, and defining $\Delta = \frac{\Hoff^2}{2\tr(H)}$ and $\UB = \tr(H) - \Delta$:
\begin{align*}
\tr(H L^\top  L) &\le \tr(H) - \frac{\alpha}{\tr(H)} \Hoff^2 + \frac{\alpha^2}{\tr(H)^2}  \tr(H) \frac{\Hoff^2}{2}  \\
&= \tr(H) - 2\alpha \frac{\Hoff^2}{2\tr(H)}  + \alpha^2  \frac{\Hoff^2}{2\tr(H)} \\
&= \tr(H) - (2\alpha - \alpha^2) \Delta \quad \text{(\textbf{Additive Bound})}
\end{align*}

Finally, to reach the multiplicative form, we substitute $\tr(H) = \UB + \Delta$ and rearrange:
\begin{align*}
\tr(H L^\top  L) &\le (\UB + \Delta) - (2\alpha - \alpha^2) \Delta \\
&= \UB + (1 - 2\alpha + \alpha^2) \Delta \\
&= \UB + (1-\alpha)^2 \Delta \\
&\le \UB + (1-\alpha)^2 \UB = \left[ 1 + (1-\alpha)^2 \right] \UB
\end{align*}
The last inequality uses the fact that $\Delta \le \UB$. To see why this is true, observe that $\UB = \tr(H) - \Delta$. Thus, the condition $\Delta \le \UB$ is equivalent to $2\Delta \le \tr(H)$. Substituting $\Delta = \frac{\Hoff^2}{2\tr(H)}$, this becomes $\frac{\Hoff^2}{\tr(H)} \le \tr(H)$, or $\Hoff^2 \le \tr(H)^2$. As shown in Part 1, this inequality holds for all PSD matrices.

\textbf{Sharpness.} Consider the case when $c = \infty$ so that $\tr(D) = \tr(HL^\top L)$ with $D$ from the true LDL decomposition. in that case the bound is $\tr(D) \leq \tr(H) - \frac{\norm{H}_{\mathrm{off}}^2}{2\tr(H)}$, which rearranged becomes $2 \geq \frac{\norm{H}_{\mathrm{off}}^2}{\tr(H)(\tr(H)-\tr(D))}$. Set
\[
M=\begin{pmatrix}0&\varepsilon\\[4pt]0&1\end{pmatrix},\qquad \varepsilon>0,
\]
one has $H=MM^\top =\begin{pmatrix}\varepsilon^2&\varepsilon\\[2pt]\varepsilon&1\end{pmatrix}$,
$\tr(H)=1+\varepsilon^2$, $\tr(D)=1$, and $\sum_{i\ne j}H_{ij}^2=2\varepsilon^2$. Hence
\[
\frac{\sum_{i\ne j}H_{ij}^2}{\tr(H)(\tr(H)-\tr(D))}=\frac{2}{1+\varepsilon^2}\xrightarrow{\varepsilon\to0}2,
\]
so the factor $1/2$ in the bound cannot be improved.
\end{proof}

\subsection{Empirical Comparison Between LDL Bounds}\label{app:bound_compare}

\begin{figure*}[t!]
    \centering
    
    \includegraphics[width=\linewidth]{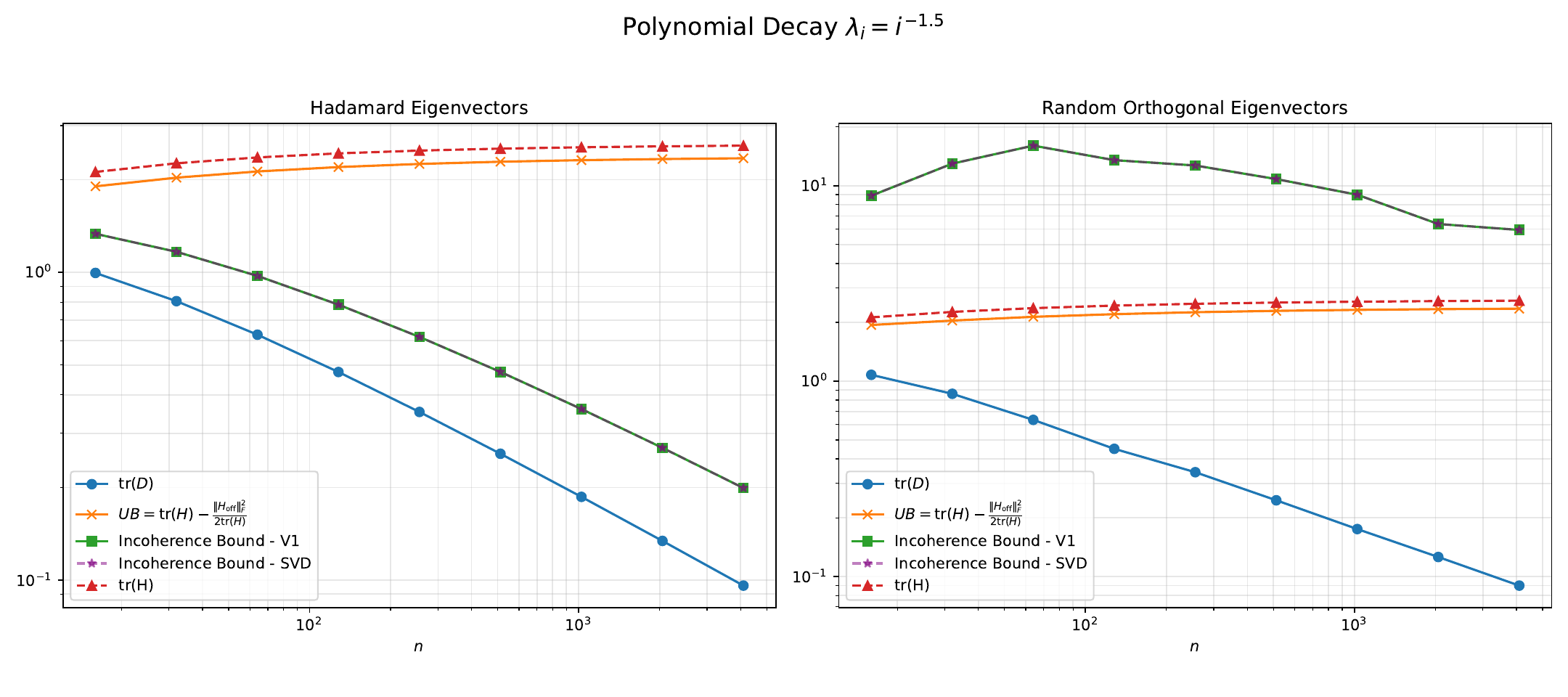}
    \includegraphics[width=\linewidth]{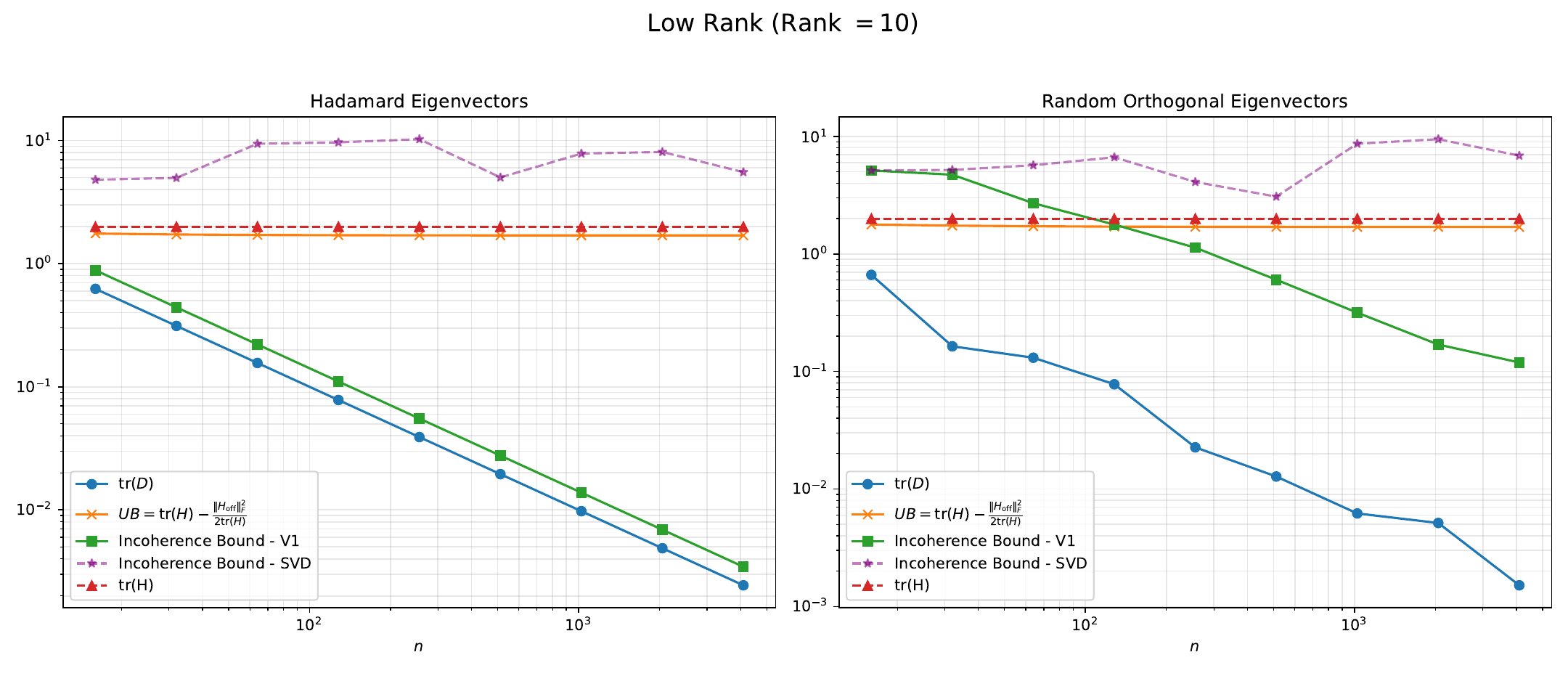}
    \caption{ Comparison among different bounds for $\tr(D)$ when varying the dimension of $H$ ($n \times n$). Incoherence Bound is $\mu_H^2\tr(H^{1/2})^2/n$ with $\mu_H = \max_{i,j}|Q_{i,j}| \sqrt{n}$ and $Q$ is an eigenvector matrix of $H$ which can be the one used to construct $H$ (V1) or the one output of the SVD  scipy function (SVD). There is ambiguity whenever 2 or more eigenvalues coincide and the best choice is $Q$ that minimizes $\max_{i,j}|Q_{i,j}|$. 
    }
    \label{fig:bounds_compare}
\end{figure*}

We compare our proposed bound for the constrained LDL against the bound introduced by \cite{chee2023quip}, which relies on the Hessian incoherence $\mu_H$. We study both bounds in the unconstrained case ($c=\infty$). We construct $n \times n$ matrices of the form $H= Q \mathrm{diag}(\lambda) Q^\top$, where the eigenvector matrix $Q$ is chosen to be either a Hadamard matrix or a random orthogonal matrix. We evaluate two eigenvalue spectra: (1) polynomial scaling where $\lambda_i = i^{1.5}$, and (2) a low-rank setting where $\lambda_i = i^{1.5}$ for $i \leq 10$ and $\lambda_i = 0$ for $i > 10$.

In the low-rank setting, the incoherence bound is ambiguous due to the non-uniqueness of eigenvectors associated with zero eigenvalues; ideally, one would select the basis that minimizes Hessian incoherence. To address this, we compare two versions of the bound: one using the ground-truth $Q$ (optimal for the Hadamard case) and another using eigenvectors computed via standard SVD (representing a practical scenario).

The results are reported in \Cref{fig:bounds_compare}, which also includes the naive bound $\tr(H)$. While the incoherence bound closely approximates $\tr(D)$ for Hadamard eigenvectors, it often exceeds both our bound ($\UB$) and $\tr(H)$ when random eigenvectors are used. Furthermore, in the low-rank case, computing eigenvectors via SVD significantly degrades the incoherence bound. In contrast, our bound remains consistently tighter than $\tr(H)$ and exhibits robust behavior across all configurations, despite being a looser estimate of $\tr(D)$. Note that if $H$ is diagonal $\UB = \tr(H) = \tr(D)$.

\section{Experimental Setup}
\textbf{OptRot.} We learn the rotations by optimizing the proxy objectives described in \Cref{sec:optrot} with Cayley SGD \cite{li2020efficient} (as in SpinQuant) with a learning rate of $1$ for $1000$ steps.
For the cheaper top-$50$ version of OptRot, we learn the rotations by optimizing only over the $50$ weight matrices with the largest loss.

\textbf{SpinQuant/QuaRot.} We use the implementation at \href{https://github.com/facebookresearch/SpinQuant}{https://github.com/facebookresearch/SpinQuant} with default parameters. 
In the case of SpinQuant, to learn rotations, we use $800$ samples of length $2048$  from C4, and we only use activation quantization to $8$-bit (as recommended when using GPTQ) if not otherwise specified. 
Activation quantization to $4$-bits did not improve the result of SpinQuant.
Rotations are optimized with $800$ steps of Cayley SGD with learning rate $1.5$ and batch size $1$. 
To implement QuaRot, we use the same implementation (as SpinQuant) with initial rotations. Hence, we do not include the online Hadamard rotations before the out-projection, which would add inference cost and are less relevant for weight-only quantization. 

\textbf{Quantization.} After rotating the model, we quantize it using GPTQ if not otherwise specified. For GPTQ, we use $512$ samples of length $256$ from the C4 dataset as the calibration set and we set the group-size parameter to $256$ and the damping parameter to $0.01$.

\textbf{Choosing the LR for data-dependent objectives}.
For optimizing with \OptRotH, we first scale the initial loss value to match that of OptRot since the values of the loss for \OptRotH can be very large and lead to numerical issues.
Once scaled, we perform a sweep over learning rates on a log-scale ranging from $1$ to $1e7$ and find that $1e5$ is the best one, which we choose for all \OptRotH experiments.

\textbf{Hessian computation.}
The metrics reported in \Cref{fig:inc-main} and the corresponding figures in the appendix were computed by first evaluating the Hessians over a calibration set and then rotating these Hessians with the learnt rotations of the respective methods.
This method resulted in stable metrics in contrast to measuring them on the fly using the Hessian for GPTQ, which used a smaller calibration set.

\textbf{SNR}.
\label{app:snr}
We also report the Signal-to-Noise Ratio (SNR) in dB to measure the quality of the quantization with GPTQ.
The SNR is computed for GPTQ as:
\begin{align}\label{eq:snr}
    \text{SNR} := 10 * \log_{10} \left( \frac{\E\|Wx\|^2}{\E\|(\hat{W} - W)x\|^2} \right) = 10 * \log_{10} \left( \frac{\tr(W H W^\top)}{\tr((W- \hat W)  H(W- \hat W)^\top)} \right),
\end{align}
where $H = \E [xx^\top]$ and the Expectation is computed over the calibration set. Note that any upper bound on the incoherence can be used to lower bound the SNR.

A high SNR means small quantization error.
According to the bound in Equation \ref{eq:gpt-inc-bound} lowering the weight incoherence should improve the error and increase the SNR.
Our results confirm this observation, the lower weight incoherence achieved by OptRot translates to a higher SNR for weight quantization.

\section{Learning SpinQuant Rotations}
By default, SpinQuant learns rotations by quantizing only the activations to $8$-bits using RTN and a straight-through estimator to backpropagate the gradients to minimize the KL divergence.
Since our focus is on weight quantization, while SpinQuant focuses more on activation quantizaton, we measure  the effect of learning the SpinQuant rotations by quantizing only the weights \textbf{(SpinQuant W4)} or activations to $4$-bits \textbf{(SpinQuant A4)} on Wikitext perplexity.

As we can see in \Cref{tab:spinquant-rot-comparison}, there are not significant improvements for both the A4 and W4 variants when quantizing weights with GPTQ.

Since SpinQuant uses RTN to learn the rotations,
choosing weight or activation quantization has an effect on the downstream performance when quantizing weights with RTN (\Cref{,tab:spinquant-rot-comparison-rtn}), i.e., when the quantization method is aligned for both rotation learning and quantization.
We observe that when the rotations are learnt with weight quantization, weight-only quantization with SpinQuant performs significantly better than the baseline which learns rotations with activation quantization.
This version of SpinQuant (W4) also outperforms OptRot (\Cref{tab:rtn}). See \Cref{sec:rtn-results} for an in-depth discussion.

\begin{table}[ht!]
\centering
\caption{WikiText perplexity for learning SpinQuant rotations with weight or activation quantization followed by weight-only quantization to $4$-bits with GPTQ.}
\label{tab:spinquant-rot-comparison}
\begin{tabular}{lccc}
\toprule
\textbf{Method} & Llama-3.2-1B & Llama-3.2-3B & Llama-3.1-8B \\
\midrule
SpinQuant (Baseline) & $10.73$ & $8.37$  & $6.71$ \\
SpinQuant (A4)      & $10.73$ & $8.36$ & $6.7$ \\
SpinQuant (W4)      & $10.79$ & $8.35$ & $6.69$ \\
\bottomrule
\end{tabular}
\end{table}

\begin{table}[ht!]
\centering
\caption{WikiText perplexity for learning SpinQuant rotations with weight or activation quantization followed by weight-only quantization to $4$-bits with RTN.}
\label{tab:spinquant-rot-comparison-rtn}
\begin{tabular}{lccc}
\toprule
\textbf{Method} & Llama-3.2-1B & Llama-3.2-3B & Llama-3.1-8B \\
\midrule
SpinQuant (Baseline) & $13.56$ & $10.03$  & $7.57$ \\
SpinQuant (W4)      & $11.75$ & $8.97$ & $7.14$ \\
\bottomrule
\end{tabular}
\end{table}

\section{RTN Results}
\label{sec:rtn-results}
We report here additional results concerning weight quantization using RTN. In particular we report weight incoherence in \Cref{fig:mu-rtn}, SNR in \Cref{fig:hessian-snr-rtn} and end-to-end evaluations in \Cref{tab:rtn}.

Learning rotations with OptRot improves the error bound with Round-to-Nearest shown in Equation \ref{eq:rtn-inc-bound}.
In case of RTN, \Cref{eq:rtn-inc-bound} shows that the only rotation invariant part in the bound is weight incoherence and
\Cref{fig:mu-rtn} shows that OptRot achieves the lowest weight incoherence.
This translates to lower error (higher SNR) as shown in \Cref{fig:hessian-snr-rtn}.
\Cref{tab:rtn} reports results for weight-only quantization with RTN, where OptRot outperforms QuaRot (Hadamard rotations).

\textbf{SpinQuant W4}.
\Cref{tab:rtn} also shows that SpinQuant (W4), which learns rotations by quantizing only the weights in the forward pass to 4 bits using RTN, performs significantly better than both SpinQuant and OptRot on downstream tasks. However, the weight incoherence (\Cref{fig:mu-rtn}) and more importantly the SNR (\Cref{fig:hessian-snr-rtn}) plots show that OptRot usually achieves a lower error bound (\Eqref{eq:rtn-inc-bound}) and layerwise objective (\Eqref{eq:trace-obj}) than SpinQuant.
This difference can be explained by the fact that SpinQuant (W4)  optimizes an end-to-end quantization-aware objective: the cross-entropy of the quantized model. Thus, SpinQuant, differently from OptRot, can exploit interactions between weights: the rotations can be learnt in a manner where certain weights compensate for quantization errors in other weights, both within and between different weight matrices.

This effect benefits downstream quantization when the quantization method used for rotation learning and quantization are the same (RTN in this case).
In contrast, while using RTN for rotation learning and quantizing with GPTQ, the performance is either similar to SpinQuant (A4) (see \Cref{tab:spinquant-rot-comparison}) or worse \citep[Table 3]{liu2024spinquant}.
These results suggest that for SpinQuant, aligning the quantization method for both rotation learning and quantization yields the best downstream results.
However, performing this alignment for GPTQ is not straightforward: 
for each step of rotation learning we would have to 
quantize each row of each weight matrix sequentially and update the Hessian matrix, making rotation learning more costly.
Instead, our approach provides a way to more efficiently improve the performance of GPTQ by minimizing an upper bound to the layerwise objective.
We note that GPTQ consistently outperforms RTN, and OptRot + GPTQ achieves the best overall results for weight-only quantization.

\section{Complete Plots}
We report weight incoherence $\mu_W$, Hessian incoherence $\mu_H$, $\tr(D)$, its upper bound and the SNR after quantizing with GPTQ, for the 1B, 3B and 8B models across all layers in Figures \ref{fig:incoherence-all-1B} \ref{fig:incoherence-all-3B} and \ref{fig:incoherence-all-8B}.
OptRot almost always finds the lowest weight incoherence.

\section{Weight Quantization}
In addtion to Table \ref{fig:inc-main} in the main paper, we report additional results for weight-only quantization.

\textbf{Without online rotations.} Table \ref{tab:no_r3r4_performance} reports the performance of OptRot in the absence of the online rotations $R_3, R_4$.
Even without these online rotations, OptRot outperforms SpinQuant and QuaRot.

\textbf{3-bit quantization.} Table \ref{tab:3bit_performance} reports performance of OptRot for $3$-bit weight-only quantization.
OptRot achieves a larger gain over SpinQuant in this regime, further reducing the gap between the $3$-bit model and the full-precision baseline.

\section{Activation Quantization}
We show that OptRot can provably improve the error for weight-only quantization.
While OptRot does not directly optimize for activation quantization, we empirically validate its performance.
We quantize the weights with GPTQ and the activations with Round-to-Nearest (RTN).
We hypothesize that a lower Hessian incoherence benefits activation quantization.
As shown in Figure \ref{fig:inc-main}, OptRot improves the weight incoherence at the cost of increasing the Hessian incoherence, yet still improving the overall weight quantization bound.
Hence, we would expect OptRot to not perform as well as SpinQuant for activation quantization.

\textbf{A8W4}. In this regime, OptRot is still competetive with SpinQuant with lower perplexity across all three models, although the gain is smaller.
Quantizing the activations to 8-bits still achieves an acceptable overall performance for serving models \citep{lin2024qserve}.
Results are reported in Table \ref{tab:a8w4_performance}.

\textbf{A4W4}.
Quantizing activations to 4-bits is a hard problem \cite{lin2024qserve} and the overall model performance drops significantly in this regime.
Here, we observe that OptRot performs worse than SpinQuant. 
Results are reported in Table \ref{tab:a4w4_performance}.

\begin{table}[h!]
    \centering
    \caption{Results without online ($R_3, R_4$) rotations for weight-only quantization at 4-bits with GPTQ.}
    \label{tab:no_r3r4_performance}
    \begin{tabular}{l@{\hskip 0.7cm}ccccccccc}
        \toprule
        \multirow{2}{*}{\textbf{Method}} & \multicolumn{3}{c}{Llama-3.2-1B} & \multicolumn{3}{c}{Llama-3.2-3B} & \multicolumn{3}{c}{Llama-3.1-8B} \\
        \cmidrule(lr){2-4} \cmidrule(lr){5-7} \cmidrule(lr){8-10}
        & Acc $\uparrow$ & Wiki $\downarrow$ & KL $\downarrow$ & Acc $\uparrow$ & Wiki $\downarrow$ & KL $\downarrow$ & Acc $\uparrow$ & Wiki $\downarrow$ & KL $\downarrow$ \\
        \midrule
        FP16 & $56.77$ & $9.76$& $0$ & $64.72$& $7.81$& $0$& $70.29$& $6.24$ & $0$\\
        \midrule
        No Rotation & $49.43$ & $13.86$ & $0.362$ & $56.28$ & $11.78$ & $0.355$ & $67.39$ & $7.47$ & $0.233$ \\
        Quarot & $\textbf{54.6}$ & $11.27$ & $0.208$ & $62.52$ & $8.64$ & $0.183$ & $68.7$ & $6.81$ & $0.154$ \\
        SpinQuant & $54.19$ & $11.21$ & $0.211$ & $62.3$ & $8.69$ & $0.181$ & $68.66$ & $6.79$ & $0.152$ \\
        \midrule
        OptRot (top-50) & $54.4$ & $11.06$ & $0.189$ & $\textbf{63.3}$ & $8.51$ & $\textbf{0.153}$ & $\textbf{69.28}$ & $6.74$ & $\textbf{0.133}$ \\
        OptRot & $54.42$ & $\textbf{10.98}$ & $\textbf{0.185}$ & $62.29$ & $\textbf{8.48}$ & $0.163$ & $69.1$ & $\textbf{6.71}$ & $\textbf{0.133}$\\
        \bottomrule
    \end{tabular}
\end{table}

\begin{table}[h!]
    \centering
    \caption{Results for A4W4 quantization, where activations are quantized with RTN and weights with GPTQ.}
    \label{tab:a4w4_performance}
    \begin{tabular}{l@{\hskip 0.7cm}ccccccccc}
        \toprule
        \multirow{2}{*}{\textbf{Method}} & \multicolumn{3}{c}{Llama-3.2-1B} & \multicolumn{3}{c}{Llama-3.2-3B} & \multicolumn{3}{c}{Llama-3.1-8B} \\
        \cmidrule(lr){2-4} \cmidrule(lr){5-7} \cmidrule(lr){8-10}
        & Acc $\uparrow$ & Wiki $\downarrow$ & KL $\downarrow$ & Acc $\uparrow$ & Wiki $\downarrow$ & KL $\downarrow$ & Acc $\uparrow$ & Wiki $\downarrow$ & KL $\downarrow$ \\
        \midrule
        FP16 & $56.77$ & $9.76$ & $0$ & $64.72$ & $7.81$ & $0$ & $70.29$ & $6.24$ & $0$\\
        \midrule
        No Rotation & $32.22$ & $204$ & $2.88$ & $34.67$ & $367$ & $2.01$ & $37.7$ & $75.68$ & $2.07$ \\
        Quarot & $49.85$ & \underline{$13.67$} & $\textbf{0.392}$ & $57.64$ & \underline{$9.96$} & $\textbf{0.297}$ & $65.05$ & \underline{$7.81$} & \underline{$0.275$} \\
        SpinQuant & $49.63$ & $\textbf{13.56}$ & \underline{$0.393$} & $\textbf{58.69}$ & $\textbf{9.88}$ & \underline{$0.308$} & $\textbf{65.7}$ & $\textbf{7.8}$ & $\textbf{0.273}$ \\
        \midrule
        OptRot (top-50) & $\textbf{50.47}$ & $14.05$ & $0.429$ & $56.92$ & $10.97$ & $0.417$ & $64.05$ & $8.29$ & $0.325$ \\
        OptRot & \underline{$50.11$} & $13.97$ & $0.43$ & $57.67$ & $10.34$ & $0.362$ & \underline{$65.27$} & $8.38$ & $0.348$ \\
        \midrule
        OptRot-v2 & $49.61$ & $13.83$ & $0.437$ & $57.36$ & $10.6$ & $0.406$ & $59.31$ & $12.1$ & $0.568$\\
        \OptRotH & $48.48$ & $14.45$ & $0.458$ & \underline{$58.05$} & $10.54$ & $0.37$ & $64.84$ & $8.5$ & $0.344$\\
        \OptRotH-v2 & $49.92$ & $14.13$ & $0.426$ & $58.02$ & $10.6$ & $0.353$ & $63.94$ & $8.74$ & $0.37$ \\
        \bottomrule
    \end{tabular}
\end{table}

\begin{table}[h!]
    \centering
    \caption{Results for 3-bit weight-only quantization with GPTQ.}
    \label{tab:3bit_performance}
    \begin{tabular}{l@{\hskip 0.7cm}ccccccccc}
        \toprule
        \multirow{2}{*}{\textbf{Method}} & \multicolumn{3}{c}{Llama-3.2-1B} & \multicolumn{3}{c}{Llama-3.2-3B} & \multicolumn{3}{c}{Llama-3.1-8B} \\
        \cmidrule(lr){2-4} \cmidrule(lr){5-7} \cmidrule(lr){8-10}
        & Acc $\uparrow$ & Wiki $\downarrow$ & KL $\downarrow$ & Acc $\uparrow$ & Wiki $\downarrow$ & KL $\downarrow$ & Acc $\uparrow$ & Wiki $\downarrow$ & KL $\downarrow$ \\
        \midrule
        FP16 & $56.77$ & $9.76$ & $0$ & $64.72$ & $7.81$ & $0$ & $70.29$ & $6.24$ & $0$\\
        \midrule
        No Rotation & $40.23$ & $77.32$ & $1.62$ & $50.55$ & $15.58$ & $0.75$ & $58.13$ & $11.82$ & $0.64$ \\
        Quarot & $51.25$ & $14.47$ & $0.427$ & \underline{$60.68$} & $10.24$ & $0.322$ & $66.36$ & $8$ & $0.275$ \\
        SpinQuant & $50.8$ & $14.38$ & $0.415$ & $59.83$ & $10.28$ & $0.327$ & \underline{$66.81$} & $7.99$ & $0.275$ \\
        \midrule
        OptRot (top-50) & \underline{$51.77$} & $14$ & $0.407$ & \underline{$60.68$} & $10.1$ & $0.32$ & $66.8$ & $7.86$ & $0.263$ \\
        OptRot & $51.37$ & $13.78$ & $0.384$ & $60.36$ & $10.17$ & $0.313$ & $\textbf{67.31}$ & $7.85$ & \underline{$0.257$} \\
        \midrule
        OptRot-v2 & $51.52$ & $13.66$ & $0.387$ & $60.44$ & \underline{$10.05$} & \underline{$0.309$} & $63.41$ & $9.91$ & $0.398$\\
        \OptRotH & $\textbf{51.86}$ & $\textbf{13.58}$ & $\textbf{0.368}$ & $59.03$ & $\textbf{10.03}$ & $0.312$ & $66.22$ & $\textbf{7.78}$ & $\textbf{0.253}$\\
        \OptRotH-v2 & $50.28$ & \underline{$13.65$} & \underline{$0.369$} & $\textbf{60.87}$ & $10.06$ & $\textbf{0.297}$ & $66.28$ & \underline{$7.83$} & $0.258$\\
        \bottomrule
    \end{tabular}
\end{table}

\begin{table}[h!]
    \centering
    \caption{Results for 4-bit weight-only quantization with RTN.}
    \label{tab:rtn}
    \begin{tabular}{l@{\hskip 0.7cm}ccccccccc}
        \toprule
        \multirow{2}{*}{\textbf{Method}} & \multicolumn{3}{c}{Llama-3.2-1B} & \multicolumn{3}{c}{Llama-3.2-3B} & \multicolumn{3}{c}{Llama-3.1-8B} \\
        \cmidrule(lr){2-4} \cmidrule(lr){5-7} \cmidrule(lr){8-10}
        & Acc $\uparrow$ & Wiki $\downarrow$ & KL $\downarrow$ & Acc $\uparrow$ & Wiki $\downarrow$ & KL $\downarrow$ & Acc $\uparrow$ & Wiki $\downarrow$ & KL $\downarrow$ \\
        \midrule
        FP16 & $56.77$ & $9.76$& $0$ & $64.72$& $7.81$& $0$& $70.29$& $6.24$ & $0$\\
        \midrule
        No Rotation & $48.14$ & $15.13$ & $0.451$ & $54.84$ & $16.2$ & $0.416$ & $64.38$ & $8.23$ & $0.279$ \\
        Quarot & $51.27$ & $13.7$ & $0.4$ & $59.41$ & $10.1$ & $0.327$ & $67.2$ & $7.57$ & $0.244$ \\
        SpinQuant & $51.18$ & $13.56$ & $0.399$ & $59.54$ & $10.03$ & $0.324$ & $67.14$ & $7.57$ & $0.242$ \\
        OptRot & $52.26$ & $12.81$ & $0.331$ & $60.46$ & $9.64$ & $0.273$ & $67.65$ & $7.4$ & $0.224$ \\
        SpinQuant (W4) & $\textbf{54.65}$ & $\textbf{11.75}$ & $\textbf{0.241}$ & $\textbf{61.75}$ & $\textbf{8.97}$ & $\textbf{0.198}$ & $\textbf{67.79}$ & $\textbf{7.14}$ & $\textbf{0.171}$ \\
        \bottomrule
    \end{tabular}
\end{table}

\begin{figure}[ht!]
    \centering
    \includegraphics[width=\linewidth]{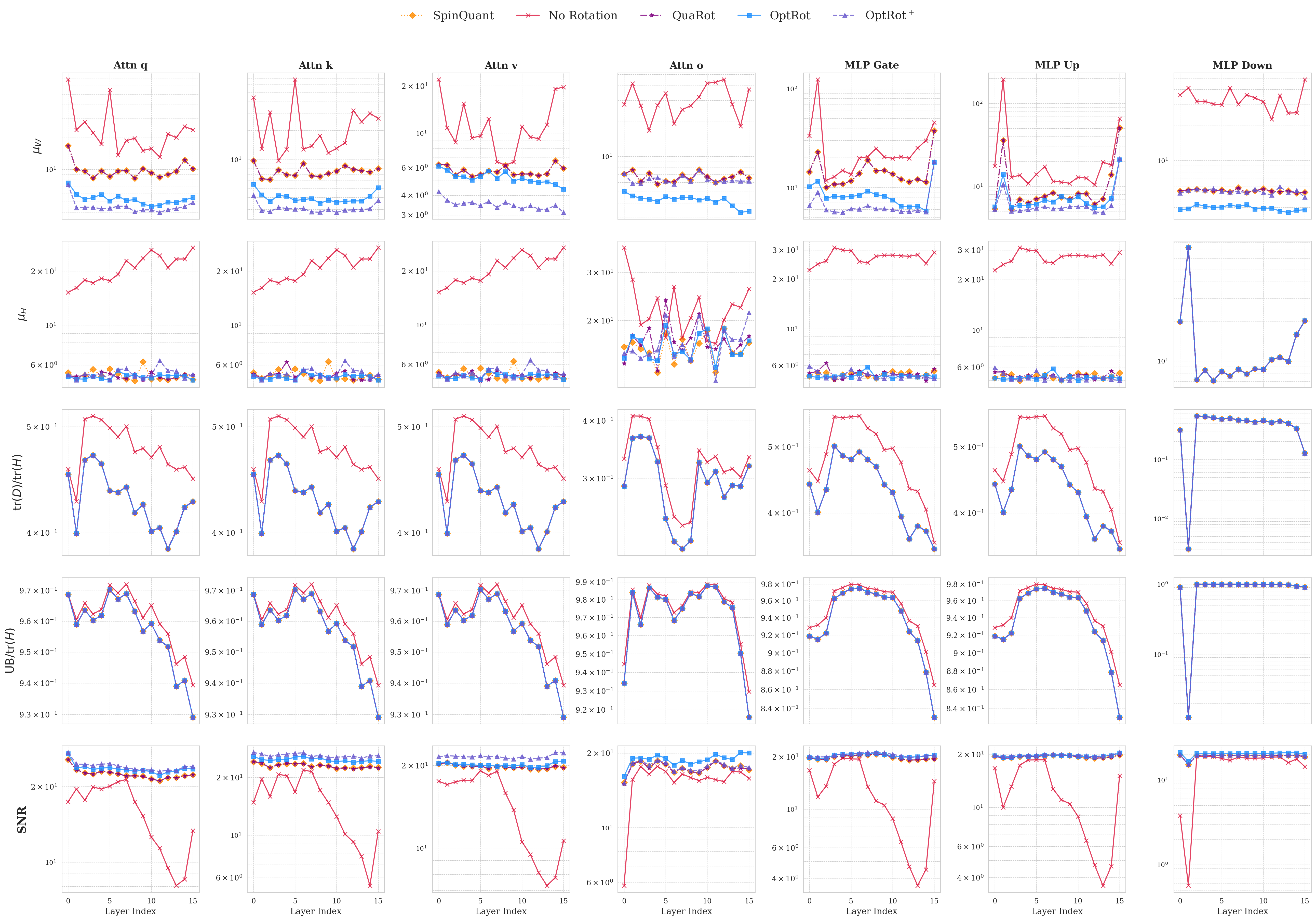}
    \caption{Weight incoherence $\mu_W$ optimized by OptRot (top row), Hessian incoherence $\mu_H$ (second row), $\tr(D) / \text{Tr}(H)$ (third row), $\text{UB} / \text{Tr}(H)$ (fourth row), and the SNR after quantization with GPTQ (bottom row) for Llama-3.2-1B.}
    \label{fig:incoherence-all-1B}
\end{figure}
\begin{figure}[ht!]
    \centering
    \includegraphics[width=\linewidth]{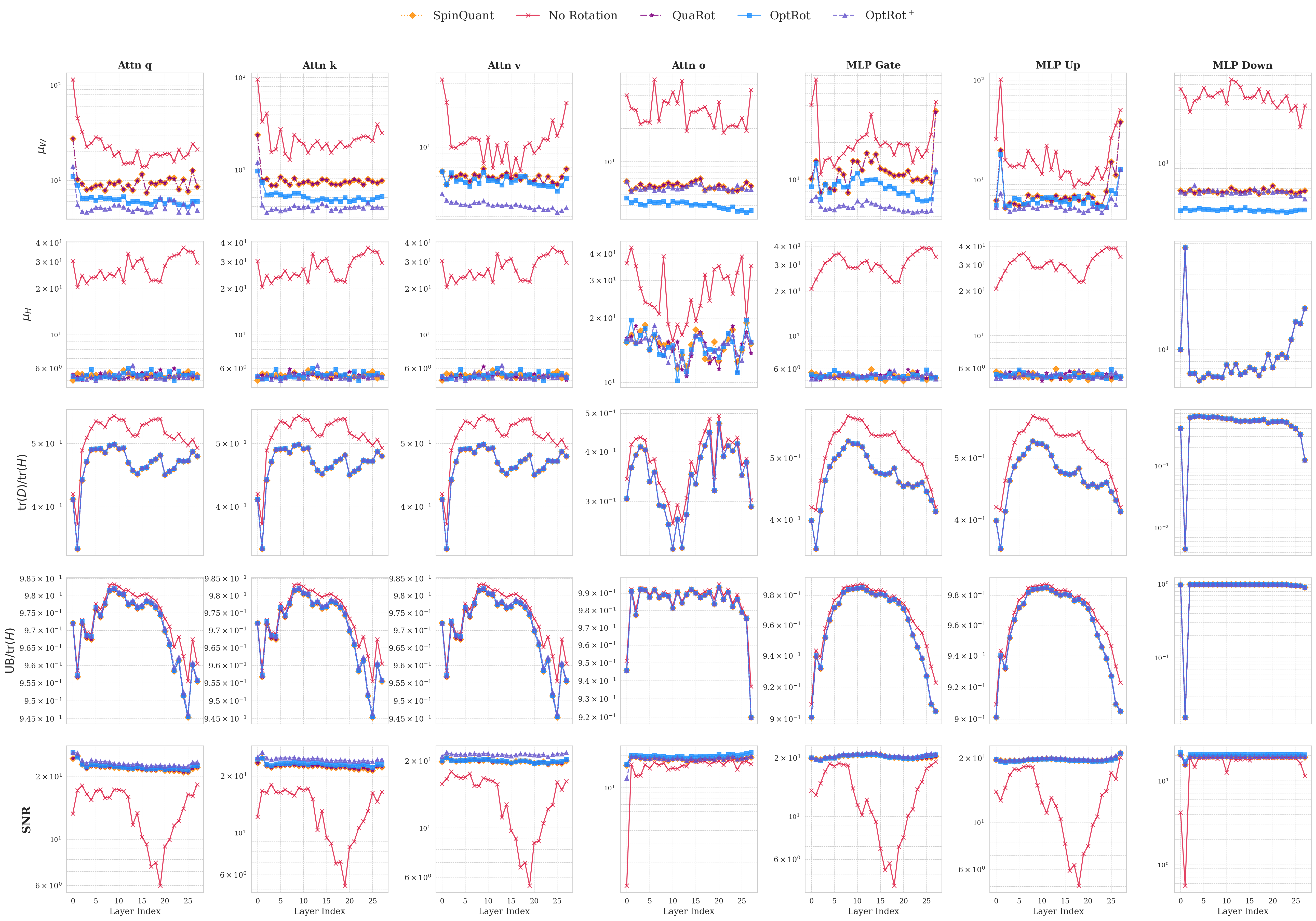}
    \caption{Weight incoherence $\mu_W$ optimized by OptRot (top row), Hessian incoherence $\mu_H$ (second row), $\tr(D) / \text{Tr}(H)$ (third row), $\text{UB} / \text{Tr}(H)$ (fourth row), and the SNR after quantization with GPTQ (bottom row) for Llama-3.2-3B.}  
    \label{fig:incoherence-all-3B}
\end{figure}
\begin{figure}[ht!]
    \centering
    \includegraphics[width=\linewidth]{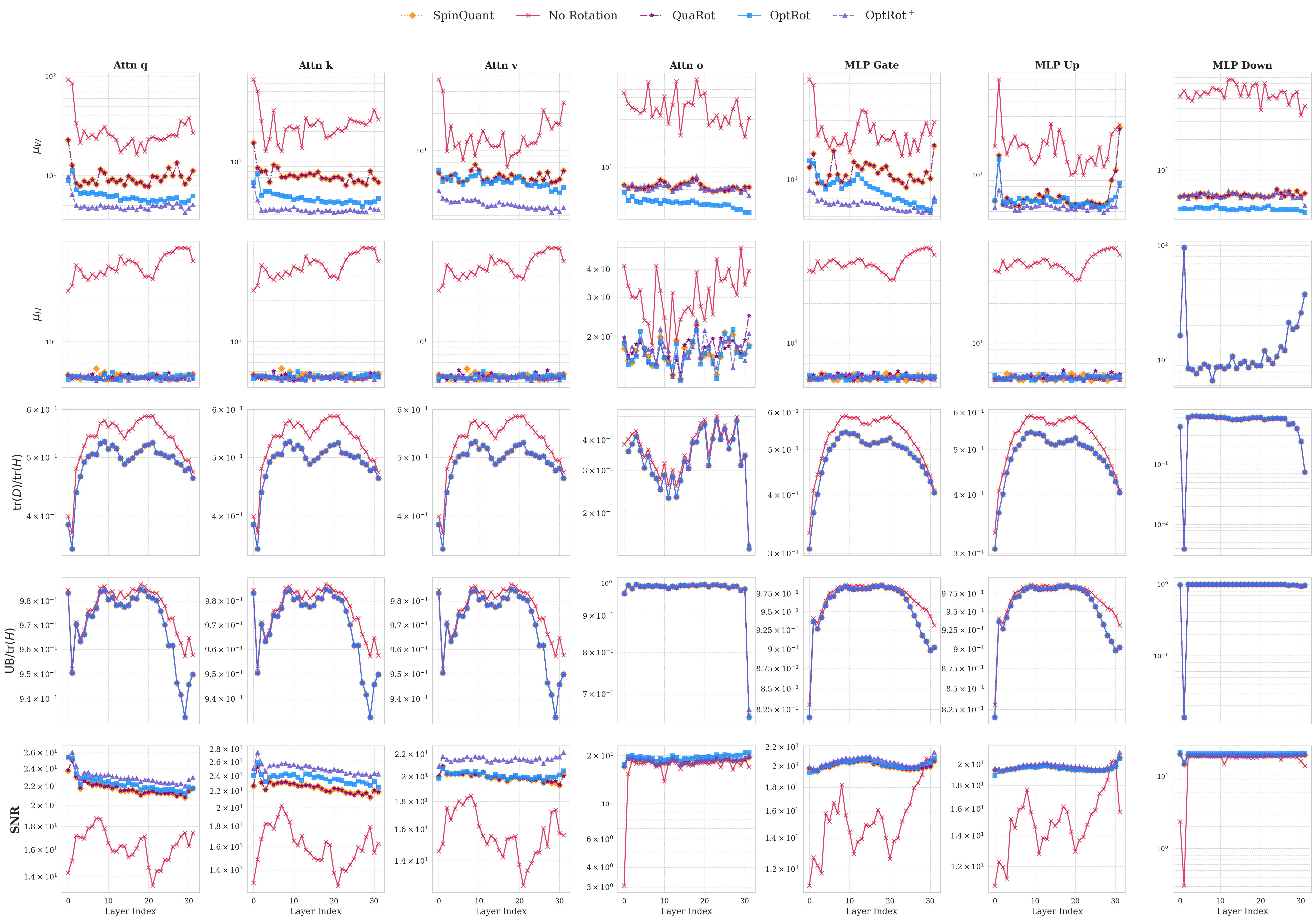}
    \caption{Weight incoherence $\mu_W$ optimized by OptRot (top row), Hessian incoherence $\mu_H$ (second row), $\tr(D) / \text{Tr}(H)$ (third row), $\text{UB} / \text{Tr}(H)$ (fourth row), and the SNR after quantization with GPTQ (bottom row) for Llama-3.1-8B.}
    \label{fig:incoherence-all-8B}
\end{figure}

\begin{figure}[ht!]
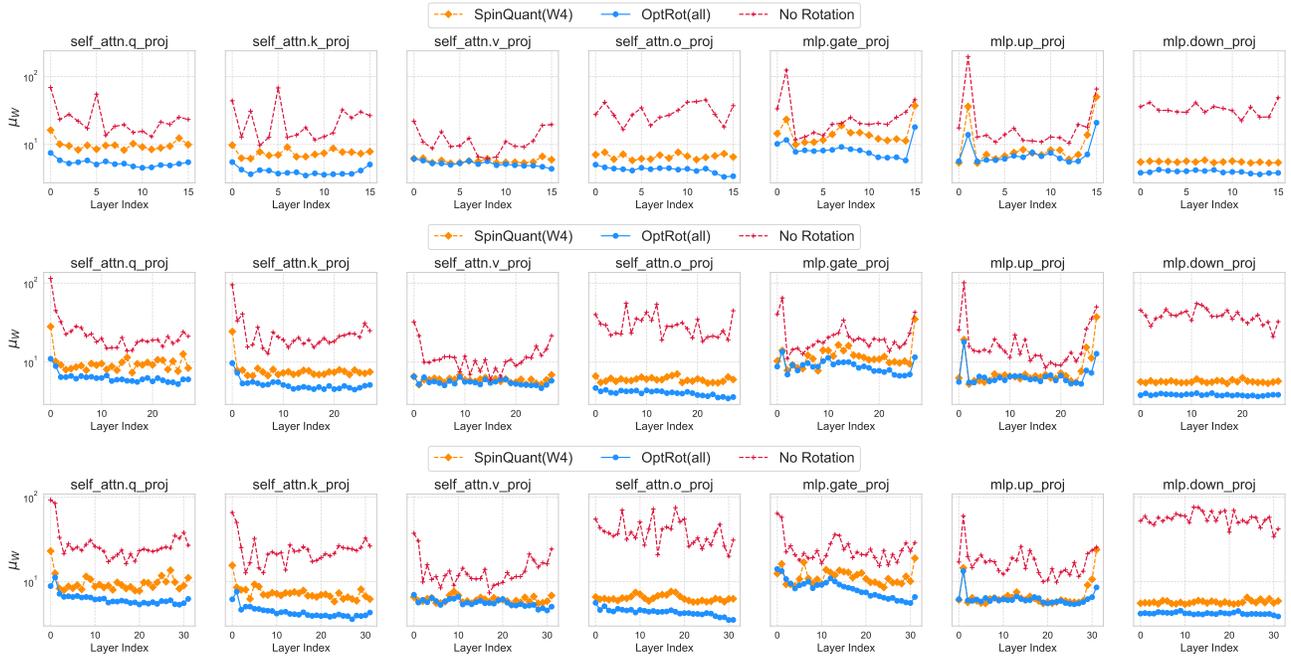

    \centering
    \includegraphics[width=\linewidth]{figs/rtn_mu_Llama-1B.png}
    \includegraphics[width=\linewidth]{figs/rtn_mu_Llama-3B.png}
    \includegraphics[width=\linewidth]{figs/rtn_mu_Llama-8B.png}
    \caption{Weight incoherence $\mu_W$ comparison for SpinQuant (W4) and OptRot 
    on Llama-3.2-1B (top row), Llama-3.2-3B (middle row) and Llama-3.1-8B (bottom row).}
    \label{fig:mu-rtn}
\end{figure}

\begin{figure}[ht!]
    \centering
    \includegraphics[width=\linewidth]{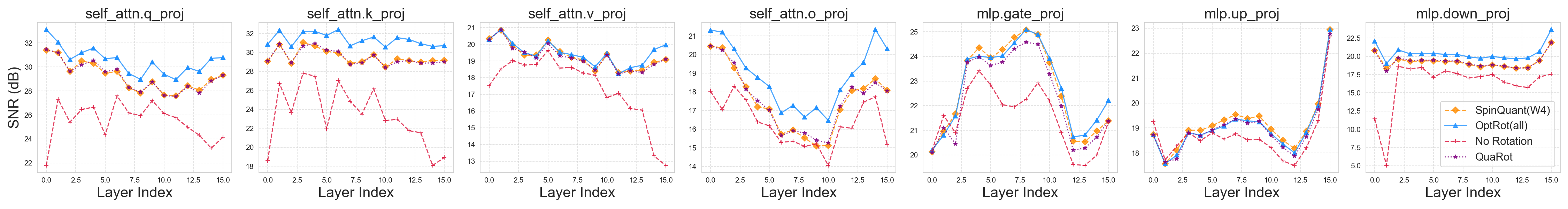}
    \includegraphics[width=\linewidth]{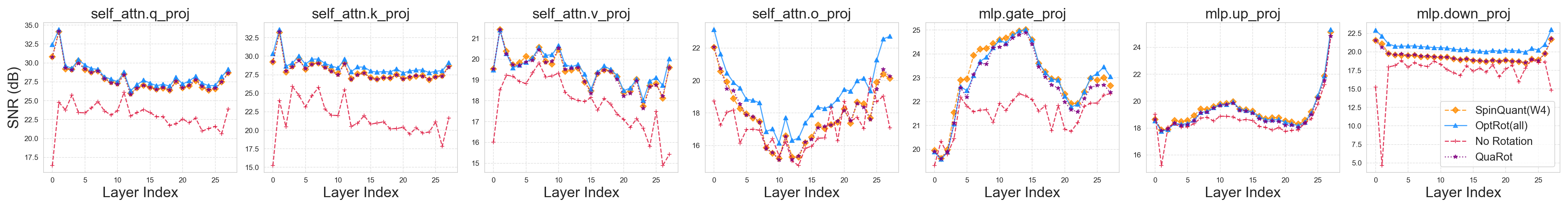}
    \includegraphics[width=\linewidth]{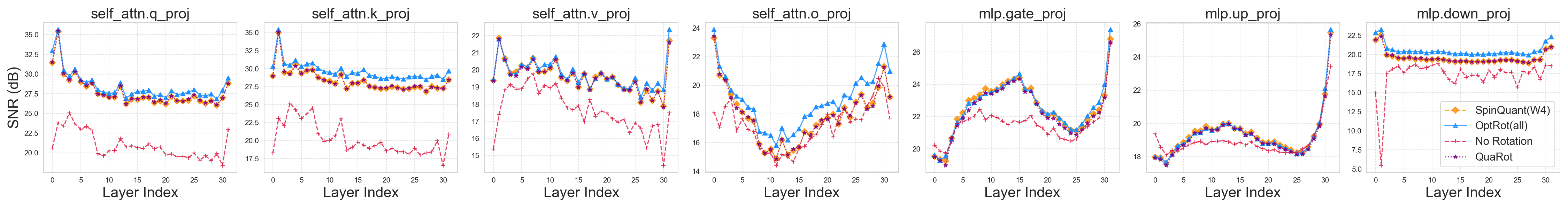}
    \caption{SNR for weight-only quantization with RTN at 4-bits for Llama-3.2-1B (top row), Llama-3.2-3B (middle row) and Llama-3.1-8B (bottom row).}
    \label{fig:hessian-snr-rtn}
\end{figure}

\begin{table}[ht!]
    \centering
    \caption{4-bit weight-only quantization with GPTQ on Qwen2 models.}
    \label{tab:weight-only-qwen2}
    \setlength{\tabcolsep}{4pt} 
    \begin{tabular}{l@{\hskip 0.5cm}ccc@{\hskip 0.5cm}ccc@{\hskip 0.5cm}ccc}
        \toprule
        \multirow{2}{*}{\textbf{Method}} & \multicolumn{3}{c}{Qwen2-0.5B} & \multicolumn{3}{c}{Qwen2-1.5B} & \multicolumn{3}{c}{Qwen2-7B} \\
        \cmidrule(lr){2-4} \cmidrule(lr){5-7} \cmidrule(lr){8-10}
         & Acc $\uparrow$ & Wiki $\downarrow$ & KL $\downarrow$ & Acc $\uparrow$ & Wiki $\downarrow$ & KL $\downarrow$ & Acc $\uparrow$ & Wiki $\downarrow$ & KL $\downarrow$ \\
        \midrule
        FP16 & $49.3$ & $13.37$ & $0$ & $58.87$ & $9.5$ & $0$ & $68.26$ & $7.09$ & $0$\\
        \midrule
        No Rotation & $43.46$ & $18$ & $0.302$ & $54.95$ & $11.62$ & $0.174$ & $66.36$ & $7.87$ & $0.09$\\
        QuaRot & $48.3$ & $\textbf{14.25}$ & $\textbf{0.073}$ & $\textbf{58.18}$ & $\textbf{9.93}$ & $0.049$ & $\textbf{68.3}$ & $7.34$ & $0.027$\\
        SpinQuant & $47.1$ & $\textbf{14.25}$ & $0.082$ & $57.95$ & $\textbf{9.93}$ & $0.049$ & $67.79$ & $7.34$ & $0.026$ \\
        \midrule
        OptRot (all) & $\textbf{49.13}$ & $\textbf{14.25}$ & $0.085$ & $58.01$ & $\textbf{9.93}$ & $\textbf{0.041}$ & $68.04$ & $\textbf{7.28}$ & $\textbf{0.023}$\\
        \bottomrule
    \end{tabular}
\end{table}

\end{document}